  \let\oldparagraph\paragraph
  \renewcommand{\paragraph}{
    \@ifstar
      \xxxParagraphStar
      \xxxParagraphNoStar
  }
  \newcommand{\xxxParagraphStar}[1]{\oldparagraph*{#1}\mbox{}}
  \newcommand{\xxxParagraphNoStar}[1]{\oldparagraph{#1}\mbox{}}
  \let\oldsubparagraph\subparagraph
  \renewcommand{\subparagraph}{
    \@ifstar
      \xxxSubParagraphStar
      \xxxSubParagraphNoStar
  }
  \newcommand{\xxxSubParagraphStar}[1]{\oldsubparagraph*{#1}\mbox{}}
  \newcommand{\xxxSubParagraphNoStar}[1]{\oldsubparagraph{#1}\mbox{}}
\patchcmd\longtable{\par}{\if@noskipsec\mbox{}\fi\par}{}{}
\def\maxwidth{\ifdim\Gin@nat@width>\linewidth\linewidth\else\Gin@nat@width\fi}
\def\maxheight{\ifdim\Gin@nat@height>\textheight\textheight\else\Gin@nat@height\fi}
\def\fps@figure{htbp}
  \renewcommand*\contentsname{Table of contents}
  \newcommand\contentsname{Table of contents}
  \renewcommand*\listfigurename{List of Figures}
  \newcommand\listfigurename{List of Figures}
  \renewcommand*\listtablename{List of Tables}
  \newcommand\listtablename{List of Tables}
  \renewcommand*\figurename{Figure}
  \newcommand\figurename{Figure}
  \renewcommand*\tablename{Table}
  \newcommand\tablename{Table}
\newcommand{\anon}{1}
\newtheorem{theorem}{Theorem}
\newtheorem{proposition}{Proposition}
\begin{document}

\def\spacingset#1{\renewcommand{\baselinestretch}%
{#1}\small\normalsize} \spacingset{1}


\if1\anon
{
  \title{\bf High-Dimensional Statistical Process Control via Manifold Fitting and Learning}
  \author{Ismail Burak Tas\hspace{.2cm}\\
    and \\
    Enrique del Castillo\footnote{Corresponding author. Dr. Castillo is Distinguished Professor of Industrial \& Manufacturing Engineering and Professor of Statistics. e-mail: exd13@psu.edu} \\
    {\small Engineering Statistics and Machine Learning Laboratory}\\
{\small Department of Industrial and Manufacturing Engineering and Dept. of Statistics}\\
{\small The Pennsylvania State University, University Park, PA 16802, USA}}
  \maketitle
} \fi

\if0\anon
{
  \bigskip
  \bigskip
  \bigskip
  \begin{center}
    {\LARGE\bf Title}
\end{center}
  \medskip
} \fi

\bigskip
\begin{abstract}
We address the Statistical Process Control (SPC) of high-dimensional, dynamic industrial processes from two complementary perspectives: manifold fitting and manifold learning, both of which assume data lies on an underlying nonlinear, lower dimensional space. We propose two distinct monitoring frameworks for online or 'phase II' Statistical Process Control (SPC). The first method leverages state-of-the-art techniques in manifold fitting to accurately approximate the manifold where the data resides within the ambient high-dimensional space. It then monitors deviations from this manifold using a novel scalar distribution-free control chart. In contrast, the second method adopts a more traditional approach, akin to those used in linear dimensionality reduction SPC techniques, by first embedding the data into a lower-dimensional space before monitoring the embedded observations. We prove how both methods provide a controllable Type I error probability, after which they are contrasted for their corresponding fault detection ability. Extensive numerical experiments on a synthetic process and on a replicated Tennessee Eastman Process show that the conceptually simpler manifold-fitting approach achieves performance competitive with, and sometimes superior to, the more classical lower-dimensional manifold monitoring methods. In addition, we demonstrate the practical applicability of the proposed manifold-fitting approach by successfully detecting surface anomalies in a real image dataset of electrical commutators.
\end{abstract}

\noindent%
{\it Keywords:} Manifold data, Manifold Fitting, Manifold Learning, Process dynamics, Serially Correlated Data.
\vfill

\newpage
\spacingset{1.6} 

\section{Introduction}\label{sec-intro}

Statistical Process Control (SPC) is a core methodology in quality control and industrial statistics, designed to monitor the stability of complex systems over time and ultimately to reduce their variability by identifying assignable causes of variation that will lead to their removal. In this paper we study the so-called ``Phase II'' SPC or online monitoring for high dimensional processes. 
In modern data-rich environments, observations are often high-dimensional, which significantly reduces the effectiveness of traditional multivariate control charts. For instance, the performance of the well-known Hotelling’s $T^2$ control chart \citep{hotelling1947multivariate} degrades as the dimensionality $D$ increases~\citep{jiang2008theoretical}, and it becomes inapplicable in the traditional sense when the number of Phase I observations $m$ is less than $D$.

A widely adopted strategy to deal with high-dimensional data is to assume data lie on a lower-dimensional manifold. This assumption enables dimensionality reduction, and multivariate SPC studies have pursued this direction, most notably through Principal Component Analysis (PCA) (e.g.,~\cite{jackson1991user},~\cite{macgregor1995statistical}). However, PCA-based monitoring methods rely on the assumption that the underlying manifold $\mathcal{M}$ is linear \citep{ma2012manifold}, that is, they are effective only when $\mathcal{M}$ can be decomposed along linear directions of variance. In reality, many dynamic processes, particularly those governed by feedback loops, do not generate data that lie on a linear manifold. In this paper, we propose two new and complementary methods for Phase II SPC of high-dimensional dynamic processes: an innovative manifold fitting approach, and what could be thought to be a more traditional approach, that of manifold learning, both of which can handle an underlying nonlinear data manifold.

Manifold learning techniques (e.g., ISOMAP \citep{tenenbaum2000global}, Diffusion Maps \citep{coifman2006diffusion}, Tangent Space Alingment \citep{zhang2004principal}, Locally Linear Embedding \citep{roweis2000nonlinear}, and Laplacian Eigenmaps \citep{belkin2003laplacian}) assume that the high-dimensional observations lie near or on $\mathcal{M}\subset \mathbb{R}^{D}$ and seek to approximate an embedding function $\hat{f}:\mathcal{M}\to \mathbb{R}^d$, where $d\ll D$. Despite their success in various machine learning applications, these nonlinear dimensionality reduction methods have seen limited adoption in the SPC literature. The primary reason is their lack of a natural and efficient out-of-sample extension. Specifically, these methods do not provide an explicit mapping $\hat{f}$, instead, they directly produce the lower-dimensional representations of the training data. This limitation presents a significant challenge for real-time process monitoring, where new observations must be quickly and consistently projected into the low-dimensional space. However, if the mapping $\hat{f}$ is constrained to be linear, it can be recovered and subsequently used to embed new observations. Such approaches are known as linear approximations of nonlinear manifold learning methods. Examples include Locality Preserving Projections (LPP) \citep{lpp}, Neighborhood Preserving Embedding (NPE) \citep{npe}, and their variants, which have been recently adopted for process monitoring by Chemical Engineers (e.g., \cite{li2015}, \cite{zhan2019improved}, \cite{cui2022nonparametric}, \cite{ma2015fault}). A common approach in these works is to construct two control charts, one for the low-dimensional space and another for the reconstruction errors in the ambient space, which requires consideration of the joint behavior of two possibly dependent charts.

More recently, related research has emerged under the name of ``manifold fitting''. The purpose of manifold fitting is to reconstruct a smooth manifold $\hat{\mathcal{M}} \subset \mathbb{R}^D$ that approximates the geometry and topology of $\mathcal{M}$, based solely on the observed high-dimensional data. \cite{Sober2020} and \cite{aizenbud2021non} first approximate the tangent space of $\mathcal{M}$ on a particular point and then fit a polynomial to locally approximate $\mathcal{M}$ as a function defined on the approximated tangent vector space. \cite{fefferman2023fitting} constructs a smooth manifold by projecting noisy data onto locally estimated discs and refining these projections through geometry-aware averaging to obtain $\hat{\mathcal{M}}$. \cite{yao2023manifold} proposed a relatively simple approach that estimates the contraction direction by constructing a Euclidean ball centered at each noisy point and then projecting each point by moving it along the estimated direction. The same authors later introduced a CycleGAN-based manifold fitting method in \cite{yao2024manifold}, integrating the algorithm in \cite{yao2023manifold}. Both approaches assume a known underlying noise level, which in practical applications is difficult to justify. The goal of manifold fitting is to recover a manifold $\hat{\mathcal{M}}$ directly in the ambient space, in contrast to manifold learning methods, which seek to embed $\mathcal{M}$ into a lower-dimensional space $\mathbb{R}^{d}$. Figure \ref{fitting_embedding} below illustrates this distinction.
 \begin{figure}[h!]
 \centering
  \includegraphics[width=0.6\textwidth]{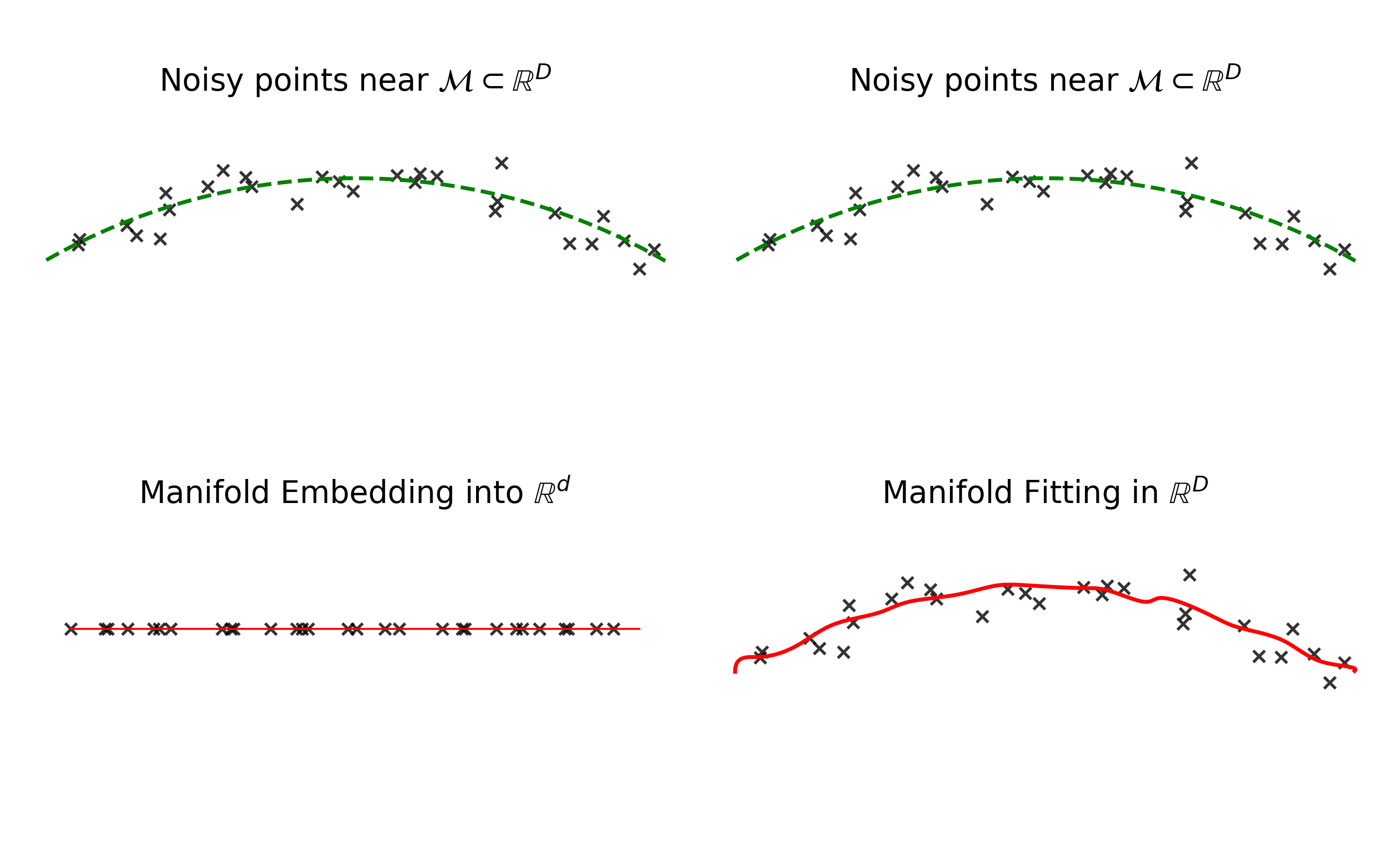}
 \caption{Manifold embedding and fitting, adapted from \cite{yao2023manifold}}
  \label{fitting_embedding}
 \end{figure}

We consider a process that generates high-dimensional observations $\{Y_t\}_{t=-m+1}^n \subset \mathbb{R}^D$, where each observation $Y_t$ is a noisy measurement of an unobserved latent state $X_t \in \mathbb{R}^D$. Specifically,
$$Y_t = X_t + \mathcal{E}_t, \quad \text{for} \quad t=-m+1,\dots,0,1,\dots,n,$$
where $\mathcal{E}_t \overset{\text{i.i.d.}}{\sim} \mathcal{N}(0, \sigma^2 I_D)$ denotes a stochastic noise term independent of $X_t$. We assume the latent process $\{X_t\}$ evolves according to a stationary and ergodic stochastic process, generated by a possibly nonlinear and unknown stochastic evolution rule $F: \mathcal{M} \to \mathcal{M}$, where $\mathcal{M} \subset \mathbb{R}^D$ is a compact, at least twice-differentiable manifold of intrinsic dimension $d < D$. The stationary distribution of $\{X_t\}$ is assumed to be uniform with respect to the $d$-dimensional Hausdorff measure on $\mathcal{M}$.
Our primary interest lies in detecting a sustained \emph{mean shift} in the ambient process $\{Y_t\}$ at an unknown change-point $\tau > 0$. After $\tau$, we assume that the mean of $Y_t$ shifts to a new value, i.e., $Y_t+\Delta$, where $\Delta \in \mathbb{R}^D$ is a non-zero vector. Detection of other types of faults will be considered in Section~\ref{performance_analysis}.

Most of the classical SPC tools are developed under an independent identically distributed (i.i.d.) data-generating process assumption. Common approaches in SPC to handle autocorrelation data created by the stochastic evolution function $F$ involve preprocessing the data to remove the autocorrelation, typically using techniques such as decorrelation or time-series modeling. 
After preprocessing, the transformed data approximate i.i.d. behavior, enabling the use of traditional SPC methods (e.g., \cite{montgomery1991some}, \cite{alwan1988time}, \cite{lu1999control}). Recent instances of this general approach are \cite{qiu2022transparent}, who proposed a sequential learning procedure that applies serial decorrelation before a conventional control chart, or \cite{qiu2020new} who developed a control chart incorporating serial decorrelation, and \cite{xie2024general} who introduced a general framework for monitoring correlated processes via sequential parameter updates to decorrelate observations.

\subsection{Overview of a novel Manifold Fitting method for SPC and paper organization
}
Provided that the process is in a state of statistical control and $\sigma$ is sufficiently small, ${Y_t}$ lies near $\mathcal{M}$. This motivates us to monitor deviations from $\mathcal{M}$ through the distance 
$$\text{dist}(Y_t,\mathcal{M})=\inf_{x\in\mathcal{M}}||Y_t-x||_2$$
without modeling the mapping $F$ contrary to the traditional SPC approaches. The distribution of $\text{dist}(Y_t,\mathcal{M})$ depends on $\mathcal{M}$ and is influenced by its geometric properties (e.g., intrinsic dimension $d$, curvature). If a reliable fit $\hat{\mathcal{M}}$ of $\mathcal{M}$ is obtained from Phase I data, on-line monitoring can then proceed by tracking $\text{dist}(Y_t,\hat{\mathcal{M}})$.

Remarkably, this purely geometric approach can be used to monitor complex processes characterized by a wide variety of signal types or streaming data, where traditional SPC methods often struggle due to the curse of dimensionality or model misspecification. By exploiting data geometry, this approach can be used in on-line monitoring of multiple correlated sensors, functional data or multivariate time series and can be extended to high-dimensional non-process data (e.g., images). In all cases, reducing monitoring to the scalar statistic $\text{dist}(Y_t,\hat{\mathcal{M}})$ allows the use of a univariate control chart, without requiring explicit models or distributional assumptions.

The proposed method offers a controllable Type I error (or equivalently, in-control average run length, ARL$_{in}$), and the extensive numerical evidence we provide demonstrate that it performs competitively with manifold learning-based SPC approaches in a variety of out-of-control scenarios. Unlike prior work, our method requires neither dimensionality reduction nor multiple control charts. Moreover, because prior approaches based on manifold learning do not account for serial correlations or guarantee a controllable ARL$_{in}$, we fill this gap by filtering temporal dependencies within the learned low-dimensional representation, thereby enabling both practical implementation and theoretical validation of prior approaches.

The paper is organized as follows. Section~\ref{sec:previous_work} reviews the literature on manifold learning methods in Phase II SPC. Section~\ref{manifold_fitting} presents our proposed framework, which fits $\mathcal{M}$ and implements a univariate control chart. Section~\ref{manifold_learning} introduces a manifold learning–based SPC framework with a controllable ARL$_\text{in}$. Section~\ref{performance_analysis} presents a performance comparison of both frameworks on a synthetic process and on a high-dimensional chemical process generated by replicating the Tennessee Eastman Process simulator. It also demonstrates the effectiveness of the proposed framework for surface defect detection on a real image dataset of electrical commutators.

\section{Previous Work on Manifold-based SPC}\label{sec:previous_work}

The curse of dimensionality significantly reduces the fault detection power of traditional multivariate SPC methods. For instance, the detection power of Hotelling’s $T^2$ control chart, even when constructed using the true mean $\mu$ and covariance matrix $\Sigma$, declines as the dimensionality $D$ increases, given a fixed Mahalanobis distance of the mean shift $\Delta$, i.e., $\Delta^\top \Sigma^{-1} \Delta$ \citep{jiang2008theoretical}. A fundamental strategy to address this issue is to embed high-dimensional observations into a lower-dimensional space. In differential geometry, an embedding is defined as a smooth map $f: \mathcal{M} \to \mathcal{N}$ between two $d$-dimensional manifolds $\mathcal{M}$ and $\mathcal{N}$, such that the inverse map $f^{-1}: f(\mathcal{M}) \subset\mathcal{N}  \to \mathcal{M}$ is also smooth. According to Whitney's Embedding Theorem, any $d$-dimensional smooth manifold $\mathcal{M}$ can be embedded in $\mathbb{R}^{d_0}$, where $d \leq d_0 \leq 2d$ \citep{lee2012introduction}. Therefore, if a valid embedding function $f$ can be found, any observation in $\mathbb{R}^D$ can be represented in $O(d)$ dimensions, leading to a substantial simplification when $d \ll D$.

Researchers have widely adopted the strategy of reducing the dimensionality and monitoring the lower-dimensional space for Phase II SPC. The general procedure involves estimating an approximation $\hat{f}$ of the embedding function $f$ using Phase I observations, then applying $\hat{f}$ to embed Phase II observations into the lower-dimensional space, where monitoring is performed. A limitation of dimensionality reduction is that it may fail to capture fault directions lying outside the retained subspace, so certain shifts cannot be detected in the lower-dimensional space. To address this, the monitoring strategy is generally complemented by an additional control chart built in the ambient space $\mathbb{R}^D$, which monitors the reconstruction error of each observation. This yields a two-chart monitoring scheme (e.g., $T^2$ chart for the lower-dimensional space complemented by a squared prediction error (SPE) chart for the reconstruction errors) in which both charts contribute to the overall in-control performance, and whose joint ARL$_{in}$ should therefore be properly adjusted. However, explicit consideration of this adjustment appears to be absent in the literature (e.g., \cite{macgregor1995statistical}, \cite{ma2015fault}, \cite{cui2022nonparametric}). Even when the joint ARL$_{in}$ is considered, this two-chart monitoring scheme is effectively equivalent to monitoring the entire ambient space, with certain dimensions prioritized due to the manifold learning methods. In our approach, this issue is avoided by fitting $\mathcal{M}$ directly in the ambient space, as discussed in Section \ref{sec:toy_example}.


The assumption of a linear manifold in SPC methods has been recently relaxed in the Chemometrics literature (e.g., \cite{li2015}, \cite{zhan2019improved}, \cite{cui2022nonparametric}, \cite{ma2015fault}). In these studies, dimensionality reduction techniques such as NPE, LPP along with their variants, have been successfully employed to learn a linear embedding function $\hat{f}$ for non-linear manifolds. Even though NPE and LPP provide linear embedding function, similar to the one obtained from PCA, these methods differ from PCA in the assumption of the linearity of the underlying manifold. This makes these methods capable of learning a non-linear manifold despite the linearity of $\hat{f}$. 
However, common practice in the Chemometrics literature is to validate their process control methods based on a single realization of a simulated chemical process, the Tennessee Eastman Process (described in section \ref{tep_results}) which limits the reliability of these results. Furthermore, the test statistic chosen by the authors is $T^2$ test statistic, which implicitly assumes that the lower-dimensional representation of the process generating data is i.i.d. and normally distributed, assumptions we did not observe in our experiments reported in Section \ref{tep_results}.

Although the Tennessee Eastman Process exhibits significant autocorrelation, prior work did not account for these temporal dependencies. It is well known that ignoring such dependencies can lead to misleading detection performance and underestimated false alarm rates. Moreover, the performance metric commonly used to validate prior methods was the fault detection rate, which is defined as the proportion of out-of-control signals given after a fault occurs, rather than the more widely adopted out-of-control average run length (ARL$_{out}$) performance metric. As a result, these methods did not provide a controllable ARL$_{in}$, despite the fact that achieving a desired ARL$_{in}$ is possible, as demonstrated in Section~\ref{tep_results}.

\section{Manifold Fitting Based SPC Framework}\label{manifold_fitting}

In this section, we present our main proposed method, which is based on monitoring deviations from the in-control manifold \( \mathcal{M} \). 

\subsection{Preliminary definitions }\label{dme_definitions}

We are interested in a $d$-dimensional Riemannian manifold $\mathcal{M}$ embedded in $\mathbb{R}^D$ with a positive reach $\operatorname{rch}(\mathcal{M})$. The reach of the manifold $\mathcal{M}$ is defined as \citep{federer1959curvature}: 
$$\operatorname{rch}(\mathcal{M}) = \inf_{x \in \mathcal{M}} \text{dist}(x,\text{Med}(\mathcal{M})),$$
where $\text{Med}(\mathcal{M})$ is the medial axis of $\mathcal{M}$ given by 
$$\text{Med}(\mathcal{M}) = \{ y \in \mathbb{R}^D \mid \exists x_1 \neq x_2 \in \mathcal{M}, \text{ such that }  \|x_1 - y \|=\|x_2 - y \| \}.$$
For example, a $d$-dimensional hyper sphere $\mathcal{S}^d=\{x\in \mathbb{R}^{d+1}, \quad  ||x||_2 = r\}$ has a reach equal to its radius $r$, and its medial axis consists of the origin in $\mathbb{R}^{d+1}$.

Now, let us recall our problem setup,
\begin{equation}\label{prob_definition}
    Y_t =
\begin{cases}
X_t + \mathcal{E}_t, & \text{for }  t = -m + 1, \dots, \tau -1 \\
X_t + \Delta + \mathcal{E}_t, & \text{for } t = \tau,\tau +1, \dots
\end{cases}
\end{equation}
where $\mathcal{E}_t \sim \mathcal{N}(0,\sigma^2I_D)$ and $X_t$ is supported on $\mathcal{M}$ whose reach $\operatorname{rch(\mathcal{M})}\gg \sigma$. Let the projection operator $\pi:\mathbb{R}^D\to \mathcal{M}$ be defined as
$$\pi(Y_t)=\arg \min_{x\in \mathcal{M}} \operatorname{dist}(Y_t,x)$$
which is well defined and continuous in the domain $\Gamma=\{y \in {\mathbb{R}^D}|\text{dist}(y,\mathcal{M})<\operatorname{rch}(\mathcal{M})\}$.

The distribution of deviations $||\pi(Y_t)-Y_t||_2$ depends both on the local geometry of $\mathcal{M}$ around $X_t$ and the noise. Under the assumption that $\operatorname{rch}(\mathcal{M})$ is very large, the influence of curvature on the distribution of $\|\pi(Y_t) - Y_t\|_2$ becomes negligible. 
Moreover, as shown in Appendix \ref{AppDeviationProof} (supp. materials), 
$$\mathbb{E}\left[\|\pi(Y_t) - Y_t\|_2^2 \mid t \geq \tau \right] > \mathbb{E}\left[\|\pi(Y_t) - Y_t\|_2^2 \mid t < \tau \right]$$
provided that $\operatorname{rch}(\mathcal{M}) = \infty$ and $\Delta$ is not entirely tangent to $\mathcal{M}$. Therefore, monitoring the deviations $\|\pi(Y_t) - Y_t\|_2$ provides a means of determining whether the process is in a state of statistical control. This approach effectively reduces a high-dimensional SPC problem to the monitoring of a scalar statistic. Since the projection operator $\pi(\cdot)$ is unknown, we will carry out this approach with an estimate $\hat{\pi}(\cdot)$ obtained from Phase I samples. 


\subsection[Fitting the Manifold M]{Fitting the Manifold $\mathcal{M}$}

\cite{yao2023manifold} describe a method for fitting a manifold from point data. Given $m = O(\sigma^{-(d+3)})$ observations, this approach fits a $d$-dimensional smooth manifold $\hat{\mathcal{M}}$ in $\mathbb{R}^D$, achieving an estimation error, measured by the Hausdorff distance, bounded by $O(\sigma^2 \log (1/\sigma))$. The bound depends on the intrinsic dimensionality $d$ and the noise level $\sigma$.  In addition to offering a tight error bound, the method does not require prior knowledge of intrinsic dimensionality $d$, in contrast to many manifold learning techniques.

This manifold fitting method consists of two main steps. In the first step, the objective is to estimate the contraction direction defined as the direction of $Y_t-\pi(Y_t)$. To do this, we first construct a $D$-dimensional Euclidean Ball centered at $Y_t$ with radius $r_0$, denoted by $\mathcal{B}_D(Y_t,r_0)$. We then estimate the contraction direction by taking a weighted average of the points within this ball. The algorithm for this step is outlined below.
\begingroup
\spacingset{1.2}
\begin{algorithm}[htbp!]
\caption{Estimate Contraction Direction}
\label{alg:contraction_direction}
\begin{algorithmic}[1]
\REQUIRE Observations  $\{Y_t\}_{t = -m+1}^{0} \subset \mathbb{R}^D$, point $z \in \mathbb{R}^D$ to be projected , radius $r_0 = C_0 \sigma$, weight exponent $k > 2$ to guarantee a twice-differentiable smoothness.
\ENSURE Estimated direction vector $\mu_z - z$

\FOR{each \( t \in \{-m+1,\dots,0\} \)}
    \STATE $\tilde{\alpha}_t = \begin{cases} \left(1 - \frac{\|Y_t - z\|_2^2}{r_0^2} \right)^k &, \text{if } \|Y_t-z\|_2 \leq r_0 \\ 0  &, \text{otherwise }  \end{cases} $
\ENDFOR
\STATE Normalize weights: $\alpha_t = \tilde{\alpha}_t / \sum \tilde{\alpha}_t$
\STATE Compute weighted average: $\mu_z = \sum \alpha_t Y_t$
\RETURN  $\mu_z - z$
\end{algorithmic}
\end{algorithm}
\endgroup
Algorithm \ref{alg:contraction_direction} returns a direction $\mu_z - z$ that aligns with the projection direction $\pi(z) - z$. More precisely, if $m = C_3 \sigma^{-(d+3)}$ and $\operatorname{dist}(z,\mathcal{M})< C\sigma$, it follows from Theorem 3.5 in \cite{yao2023manifold} that, 
$$\sin\left\{ \Theta(\mu_z - z, \pi(z) - z) \right\} = O\bigg(\sigma \sqrt{\log(1/\sigma)}\bigg).$$
with probability at least $1-O\big(\exp{(-c_0\sigma^{-c_1})}\big)$ for some constants $c_0$ and $c_1$.
In the second step of the method, the objective is to move along this contraction direction by constructing a hyper-cylinder defined as
$$V_{Y_t} = \mathcal{B}_{D-1}(Y_t,r_1)\times \mathcal{B}_{1}(Y_t,r_2)$$
where the axis is aligned with the direction estimated in Algorithm~\ref{alg:contraction_direction} and $r_1=C_2\sigma$, $r_2=C_2\sigma \sqrt{\log (1/\sigma)}$. The projected point $\pi(Y_t)$ is estimated by computing a weighted average of the points contained within a hyper-cylinder. This step in the manifold fitting method can be interpreted as placing a thin, elongated tube along the estimated contraction direction to guide the estimation of the projected point. The algorithmic formulation of this step is presented below.
\begingroup
\spacingset{1.2}
\begin{algorithm}[htbp!]
\caption{Local Contraction Toward the Manifold}
\label{alg:contract_point}
\begin{algorithmic}[1]
\REQUIRE Observations $\{Y_t\}_{t = -m+1}^{0} \subset \mathbb{R}^D$, point $z \in \mathbb{R}^D$ to be projected , radius parameters $r_1=C_2\sigma$, $r_2=C_2\sigma \sqrt{\log(1/\sigma)}$, weight exponent $k\geq 2$
\ENSURE Projected point $\hat{\pi}(z)$

\STATE Compute the estimated direction $\mu_z - z$ with the Algorithm \ref{alg:contraction_direction}
\FOR{each \( t \in \{ -m +1,\dots,0\} \)}
    \STATE Compute:
    $u_t = \frac{(\mu_z - z)(\mu_z - z)^{\top}}{\|\mu_z - z\|_2^2}(Y_t-z), \quad v_t = Y_t - z - u_t$
    {\footnotesize
    \[
    w_v(v_t) = \begin{cases} \left(1 - \frac{\|v_t\|_2^2}{r_1^2} \right)^k, & \text{if } \|v_t\|_2 \leq r_1  \\  
    0, & \text{otherwise } \end{cases},
    \quad
    w_u(u_t) = \begin{cases}
    1, & \text{if } \|u_t\|_2 \leq r_2 / 2 \\
    \left(1 - \left( \frac{2\|u_t\|_2 - r_2}{r_2} \right)^2 \right)^k, & \text{if } \|u_t\|_2 \in (r_2/2, r_2) \\
    0, & \text{otherwise}
    \end{cases}
    \]
    }
    Set $\tilde{w}_t = w_u(u_t) \cdot w_v(v_t)$
\ENDFOR
\STATE Normalize weights: $\beta_t = \tilde{w}_t / \sum \tilde{w}_t$
\STATE Compute weighted average: $\hat{\pi}(z) = \sum \beta_t Y_t$
\RETURN \( \hat{\pi}(z) \)
\end{algorithmic}
\end{algorithm}
\endgroup
Algorithm~\ref{alg:contract_point} returns a point $\hat{\pi}(z)\in \mathbb{R}^D$ that estimates the projection $\pi(z)$. Similarly, if \( m = C_3 \sigma^{-(d+3)} \) and $\operatorname{dist}(z,\mathcal{M})< C\sigma$, then it follows from Theorem 4.5 in~\cite{yao2023manifold} that,
$$\|\hat{\pi}(z) - \pi(z)\|_2 =O\bigg( \sigma^2 \log(1/\sigma)\bigg),$$
with probability at least  $1-O\big(\exp{(-c_0\sigma^{-c_1})}\big)$  for some constant $c_0$ and $c_1$.

Now, suppose that $m$ Phase I observations are given. These points naturally define a region $\Gamma=\{y \in {\mathbb{R}^D}|\text{dist}(y,\mathcal{M})<C\sigma\}$. Then, it follows from Theorem 5.2 in \cite{yao2023manifold} that $\hat{\mathcal{M}}=\{\hat{\pi}(y) \mid y \in \Gamma\}$ is a smooth sub-manifold of $\mathbb{R}^D$ s.t.
for any $x \in \hat{\mathcal{M}}$, $\operatorname{dist}(x,\mathcal{M})\leq O\big(\sigma^2 \log(1/\sigma)\big)$ with high probability. 

The algorithm presented in \cite{yao2023manifold} achieves a state-of-the-art error bound within the manifold fitting literature, but the constants $C_0,C_1$ and $C_2$ were not defined, nor a method to estimate $\sigma$ from the data was given, and these determine the distances $r_0$, $r_1$, and $r_2$ in Algorithm 1-2. 
Algorithm 3 provides a practical estimate of $\sigma$, and we determine the constants $r_0$, $r_1$, and $r_2$ that ensure the manifold fitting method theoretical guarantees by stipulating that:
\begin{enumerate}[label=(\alph*)]
    \item $r_0,r_1,r_2 \ll \operatorname{rch}(\mathcal{M})$ and $r_2\geq r_0 \geq r_1$
    \item $r_0,r_1,r_2$ are large enough that we have enough number of points, e.g. 5 points, in $\mathcal{B}_D(y,r_0)$ and  $V_{y}$ for any $y \in \Gamma$
    \item $r_0$ is large enough that $\mathcal{B}_D(y,r_0) \cap \mathcal{M} \neq \emptyset$ for any $y \in \Gamma$\vspace{-0.1cm}
\end{enumerate}
\begingroup
\spacingset{1.2}
\begin{algorithm}[htbp!]
\caption{Iterative Estimation of Noise Level \( \sigma \)}
\label{alg:estimate_sigma_residual}
\begin{algorithmic}[1]
\REQUIRE Observations $\{Y_t\}_{t = -m+1}^{0} \subset \mathbb{R}^D$, manifold dimension $d$, initial guess $ \sigma^{(0)}$, tolerance $\epsilon>0$, max iterations  $T$
\ENSURE Estimated noise level $\hat{\sigma}$
\STATE Initialize iteration index $i = 0$, set $\sigma^{(0)}$
\REPEAT
    \STATE Set \( \sigma = \sigma^{(i)} \)
    \STATE Compute projection $\hat{\pi}(Y_t)$ for all  $t$ using Algorithms~\ref{alg:contraction_direction} and~\ref{alg:contract_point} with current $\sigma$
    \STATE Update estimate:
    $\sigma^{(i+1)} = \sqrt{ \frac{1}{m(D - d)} \sum_{t = -m+1}^0 \|Y_t - \hat{\pi}(Y_t)\|_2^2 }$
    \STATE Increment $i \leftarrow i + 1$
\UNTIL{ \( |\sigma^{(i)} - \sigma^{(i-1)}| < \epsilon \) or \( i \ge T \) }
\RETURN \( \hat{\sigma} = \sigma^{(i)} \)
\end{algorithmic}
\end{algorithm}
\endgroup
Algorithm 3 ignores any curvature effects of $\mathcal{M}$ but it remains practical when $\operatorname{rch}(\mathcal{M})$ is large. Also, if $D\gg d$, we can simply neglect $d$ and divide the sum of squared residuals by $D$. We emphasize that the objective here is not to obtain an unbiased estimator of $\sigma$, but rather to obtain a practical estimate since any deviation from the true $\sigma$ will ultimately be absorbed into the constants $C_0$, $C_1$ and $C_2$.

\subsection[Illustrative Example: a stochastic process on a d-sphere]%
{Illustrative Example: a stochastic process on a $d$-sphere}\label{sec:illustraive_examples}

Let $\mathcal{M}$ be a $d$-dimensional unit sphere $\mathcal{S}^d$ embedded in $\mathbb{R}^{D}$ for $d<D$, defined as $\mathcal{S}^d = \{ x \in \mathbb{R}^D : x_{d+2} = \cdots = x_D = 0, \; \sum_{i=1}^{d+1} x_i^2 = 1 \}$. Suppose the latent process $\{X_t\}$ evolves on $\mathcal{S}^d$ according to the following relation:
\begin{equation}\label{eq:synthetic_latent}
    X_t = \pi(X_{t-1} + E_{t}), \quad \text{for} \quad t=-m+1,-m+2,\dots,
\end{equation}
where $E_t \sim \mathcal{N}(0, \sigma_x^2 I_{D})$ is independent Gaussian noise, $X_{-m}\sim \text{Unif}(\mathcal{S}^d)$ and $\pi(\cdot)$ denotes the projection operator onto $\mathcal{S}^d$, ensuring that $X_t \in \mathcal{S}^d$ for all $t$. This stochastic process is strictly stationary with a uniform stationary distribution on $\mathcal{S}^d$ as shown in Appendix \ref{AppStcProcess} (supp. materials).
The observed process $\{Y_t\}$ is then defined as in our main problem setting:
\begin{equation}\label{eq:synthetic_ambient}
Y_t = X_t + \mathcal{E}_t, \quad \text{for} \quad t= -m+1,-m+2,\dots,
\end{equation}
where \( \mathcal{E}_t \sim \mathcal{N}(0, \sigma^2 I_D) \) is independent observational noise. 
\begin{figure}[H]
 \centering
  \includegraphics[width=0.8\textwidth]{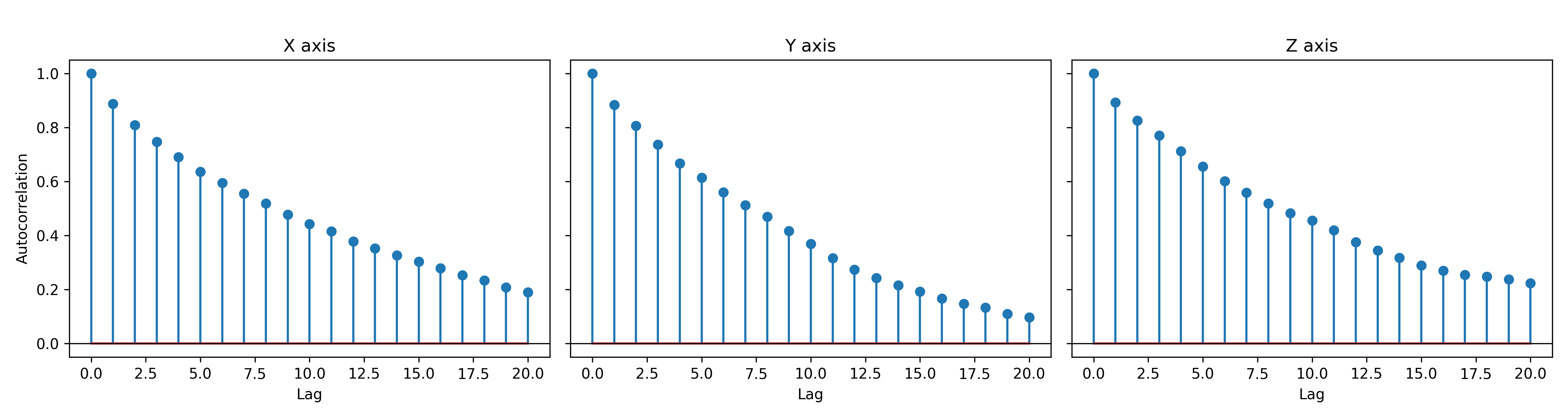}
 \caption{Estimated autocorrelation of $\{Y_t\}$ along each axis. Unlike a stationary process in Euclidean space, the estimated autocorrelations do not decay exponentially. We set $\sigma_x = 0.3$ and $\sigma = 0.1$ to generate this single realization.}
  \label{fig:auto_correlation_on_sphere}
 \end{figure}
We generated a single realization of $Y_t$ with $d=2$ and $D=3$ for visualization purposes over 1100 time steps. Figure \ref{fig:auto_correlation_on_sphere} shows the estimated autocorrelation of the process in each axis. 
\begingroup
\spacingset{1.2}
 \begin{figure}[htbp!]
    \centering
    \begin{subfigure}[b]{0.3\textwidth}
        \centering
        \includegraphics[width=\textwidth]{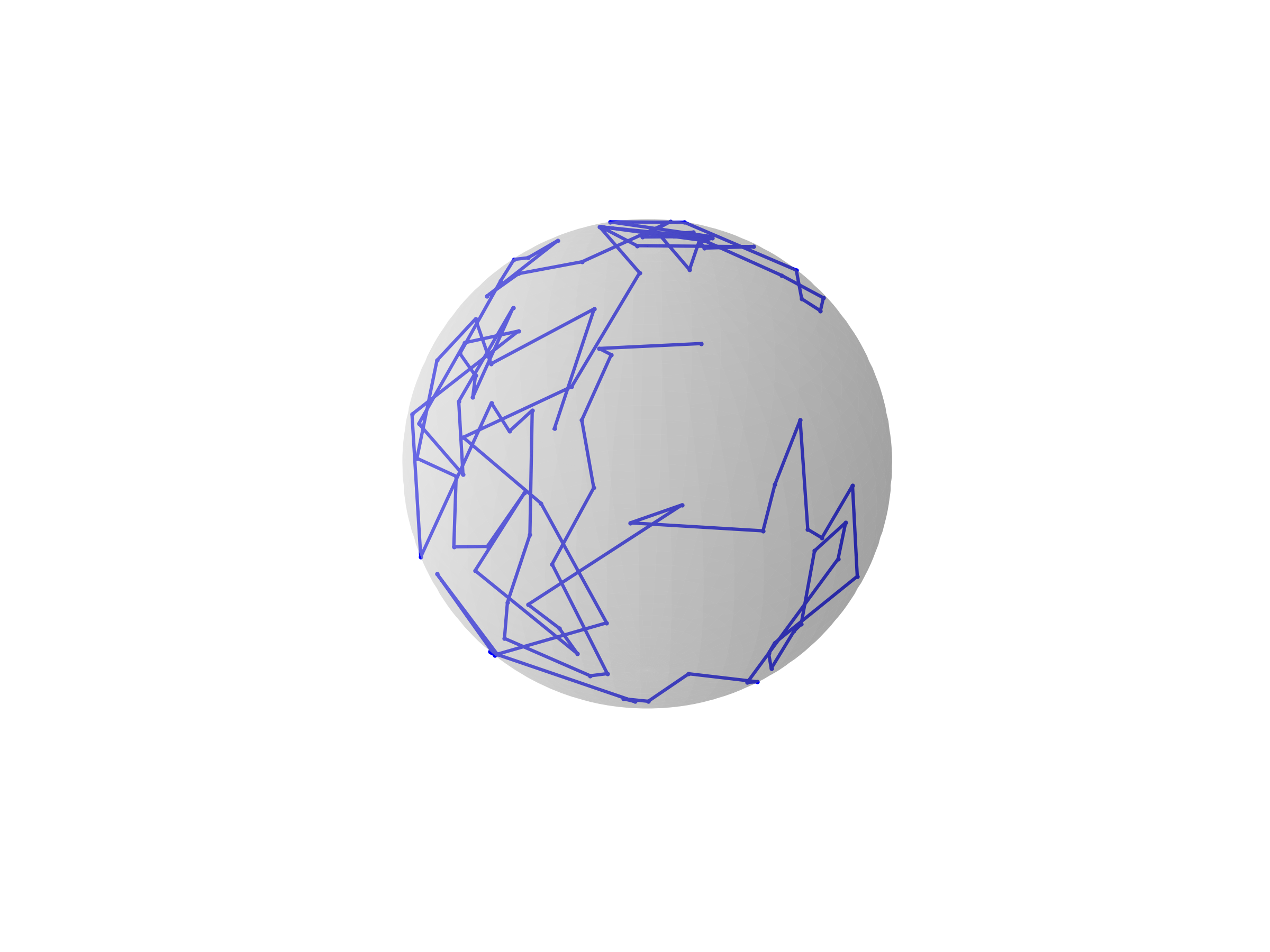}
        \caption{Trajectory of $X_t\in \mathcal{M}$}
    \end{subfigure}
    \hfill
    \begin{subfigure}[b]{0.3\textwidth}
        \centering
        \includegraphics[width=\textwidth]{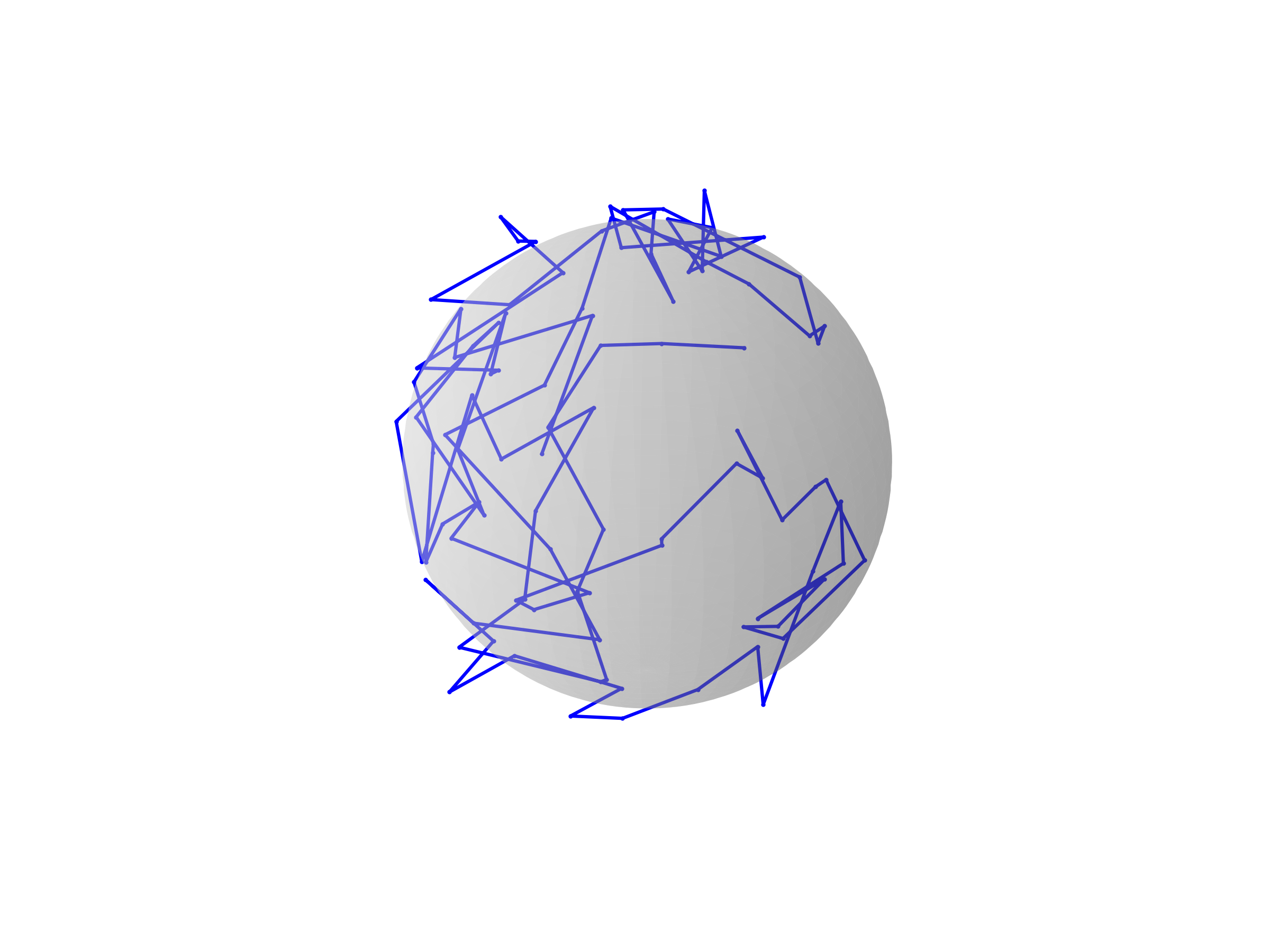}
        \caption{Trajectory of $Y_t\in \mathbb{R}^3$}
    \end{subfigure}
        \hfill
    \begin{subfigure}[b]{0.3\textwidth}
        \centering
        \includegraphics[width=\textwidth]{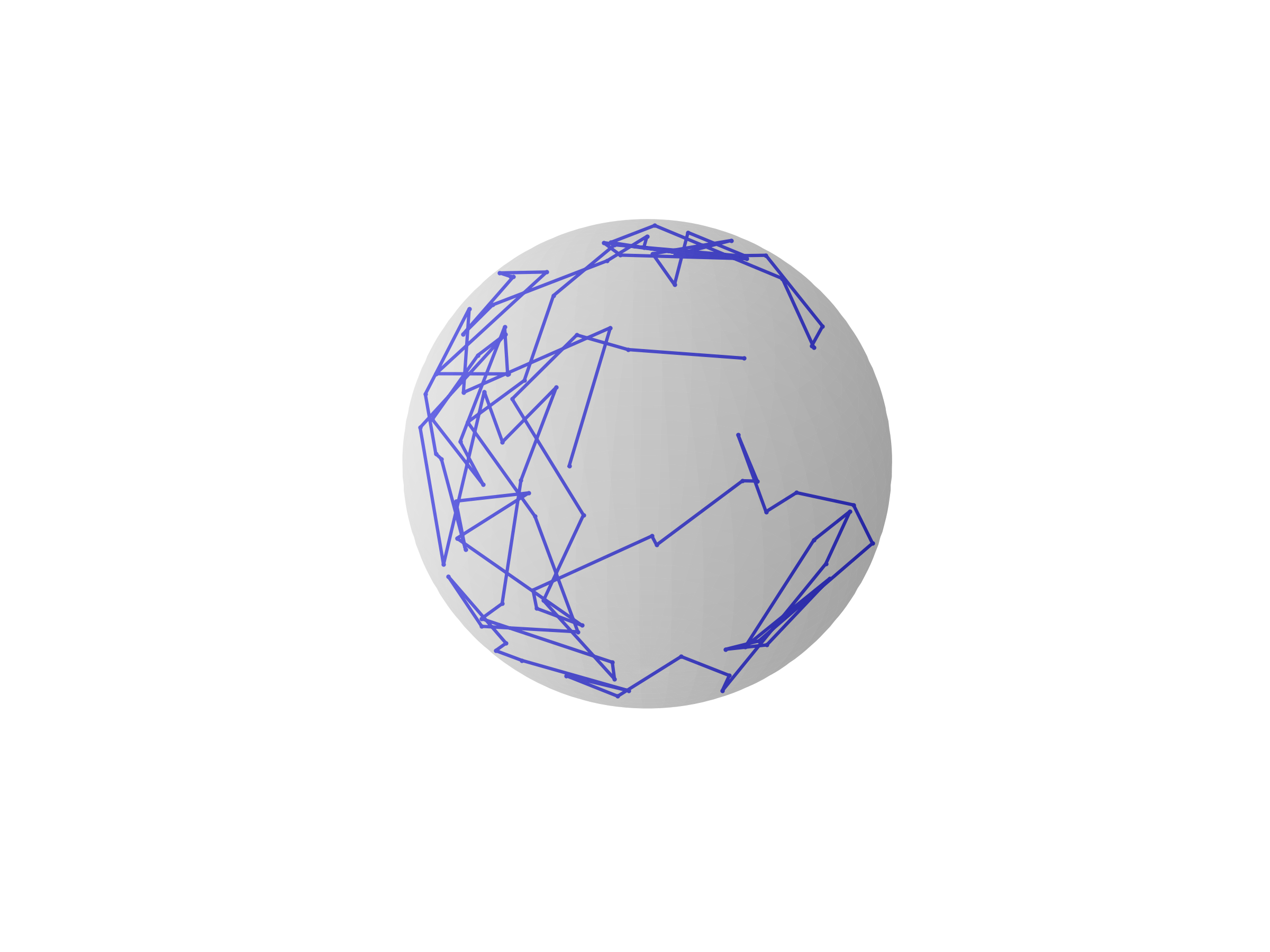}
        \caption{Trajectory of $\hat{\pi}(Y_t)$}
    \end{subfigure}
    \hfill
    \caption{The trajectories of $X_t \in \mathcal{M}$, $Y_t \in \mathbb{R}^3$, and the projections onto the fitted manifold $\hat{\pi}(Y_t)$ of the last 100 observations. We set \( C_0 = C_2 = 4 \) and \( C_1 = 2 \) in Algorithms~\ref{alg:contraction_direction}, \ref{alg:contract_point}, and \ref{alg:estimate_sigma_residual}. Algorithm~\ref{alg:estimate_sigma_residual} is initialized with \( \sigma^{(0)} = 0.05 \), yielding an estimated value of \( \hat{\sigma} = 0.1298 \).}
    \label{fig:process_on_sphere}
\end{figure}
\endgroup
The first 1000 observations were used to fit the manifold in $\mathbb{R}^3$. Subsequently, the remaining 100 observations were projected onto the fitted manifold.

\begingroup
\spacingset{1.2}
 \begin{figure}[htbp!]
    \centering
    \begin{subfigure}[b]{0.3\textwidth}
        \centering
        \includegraphics[height=3cm]{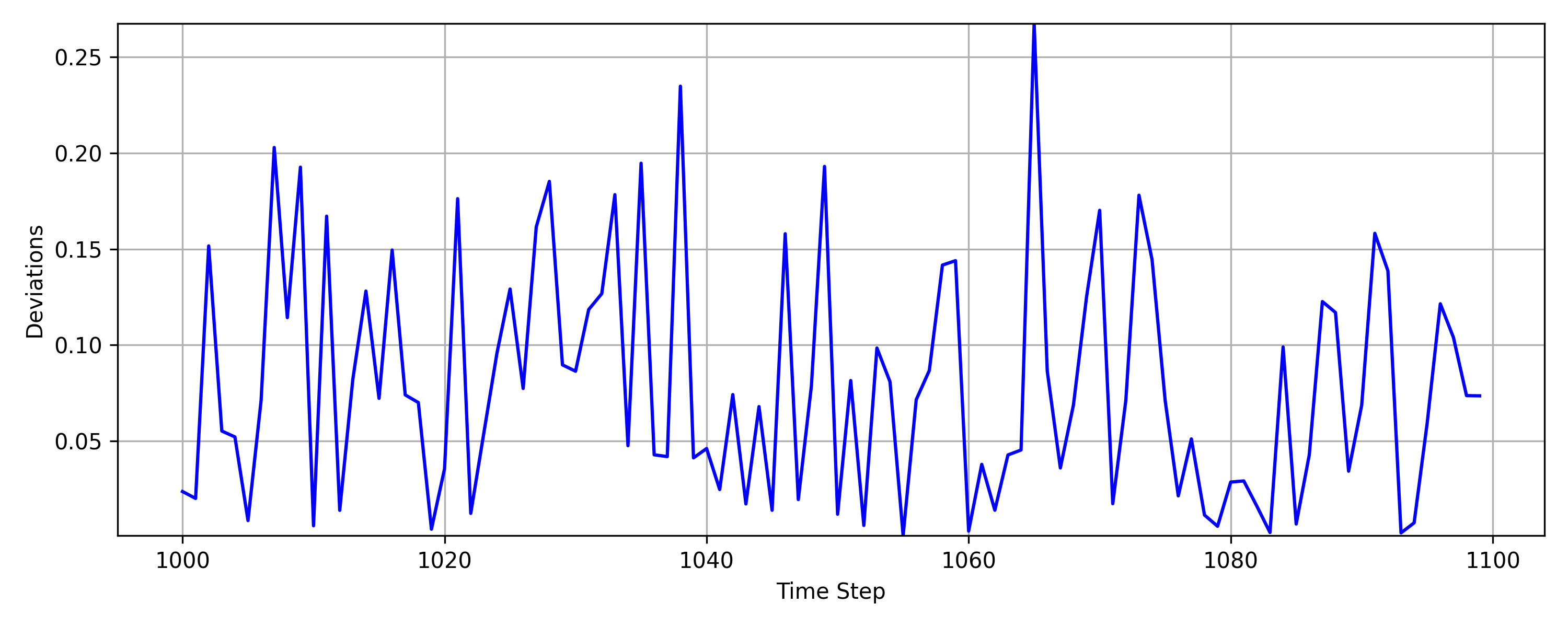}
        \caption{$||Y_t - \pi(Y_t)||_2$}
    \end{subfigure}
    \hfill
    \begin{subfigure}[b]{0.3\textwidth}
        \centering
        \includegraphics[height=3cm]{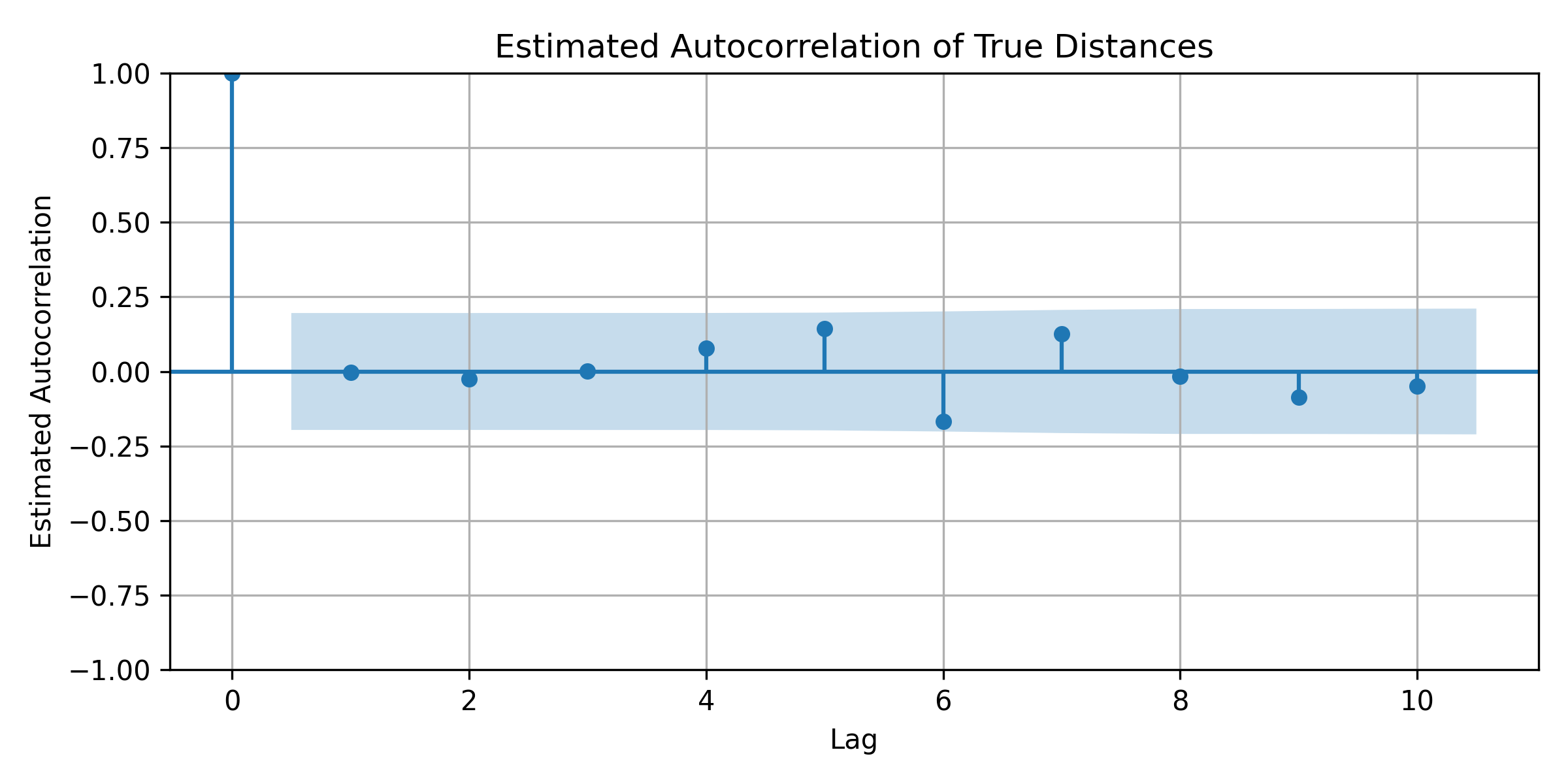}
        \caption{Estimated autocorrelations}
    \end{subfigure}
    \hfill
    \begin{subfigure}[b]{0.3\textwidth}
        \centering
        \includegraphics[height=3cm]{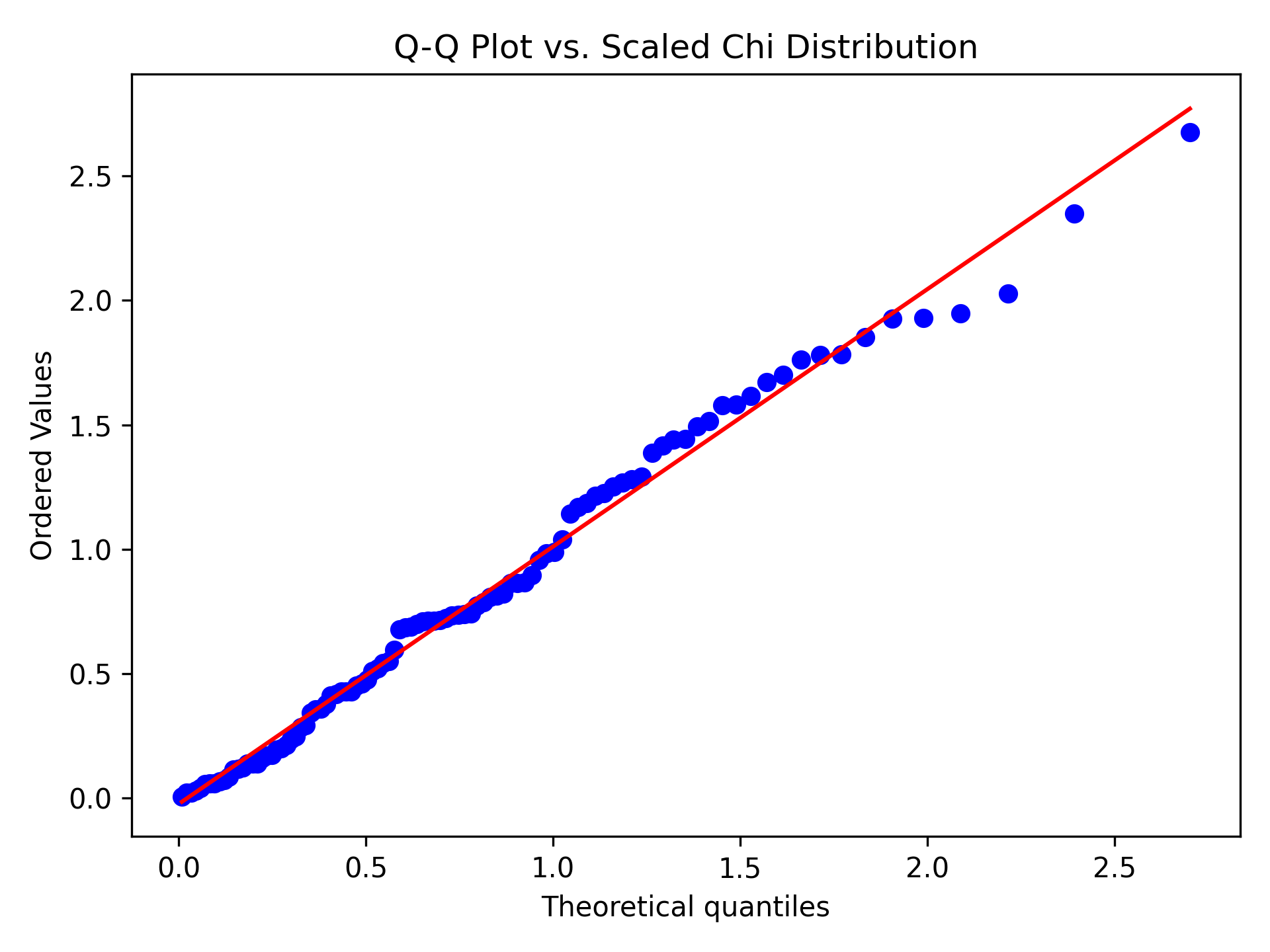}
        \caption{Q-Q Plot of $\sigma\chi_{1}$ vs $||Y_t - \pi(Y_t)||_2$}
    \end{subfigure}

    \vspace{0.4cm}
    \begin{subfigure}[b]{0.3\textwidth}
        \centering
        \includegraphics[height=3cm]{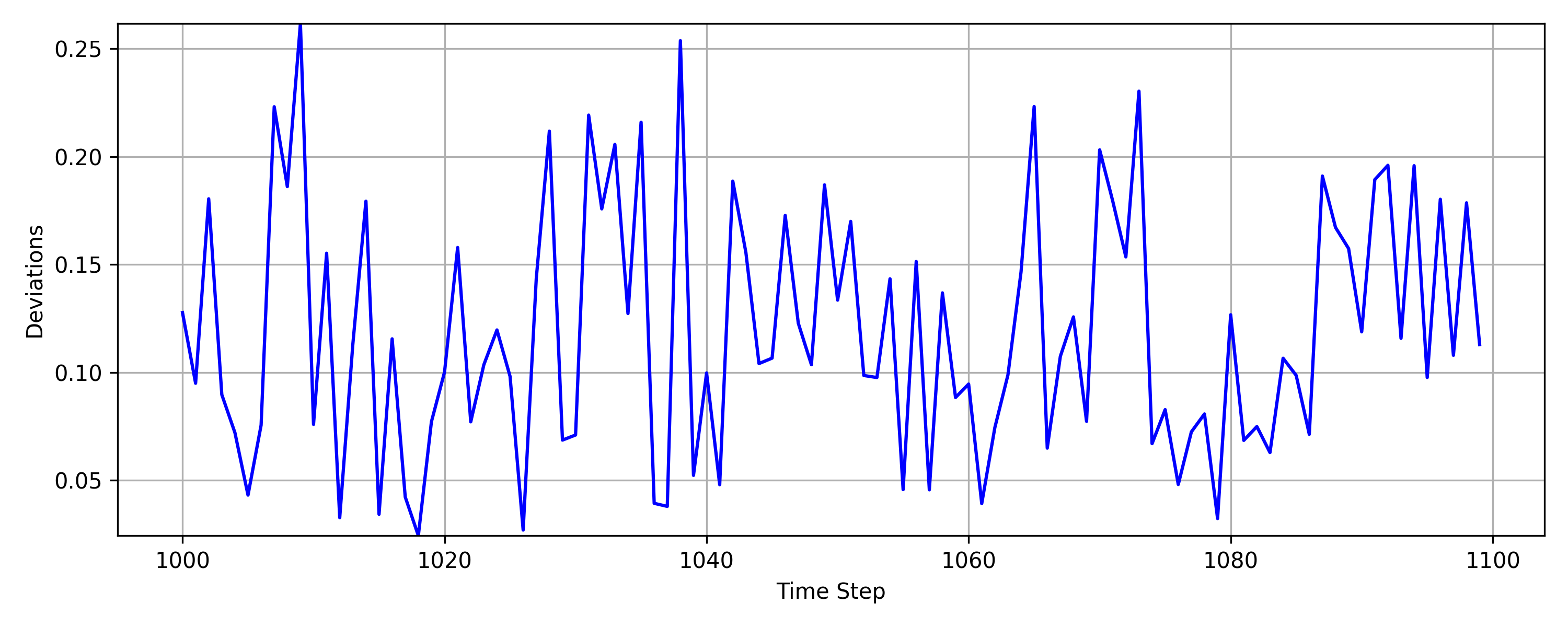}
        \caption{$||Y_t - \hat{\pi}(Y_t)||_2$}
    \end{subfigure}
    \hfill
    \begin{subfigure}[b]{0.3\textwidth}
        \centering
        \includegraphics[height=3cm]{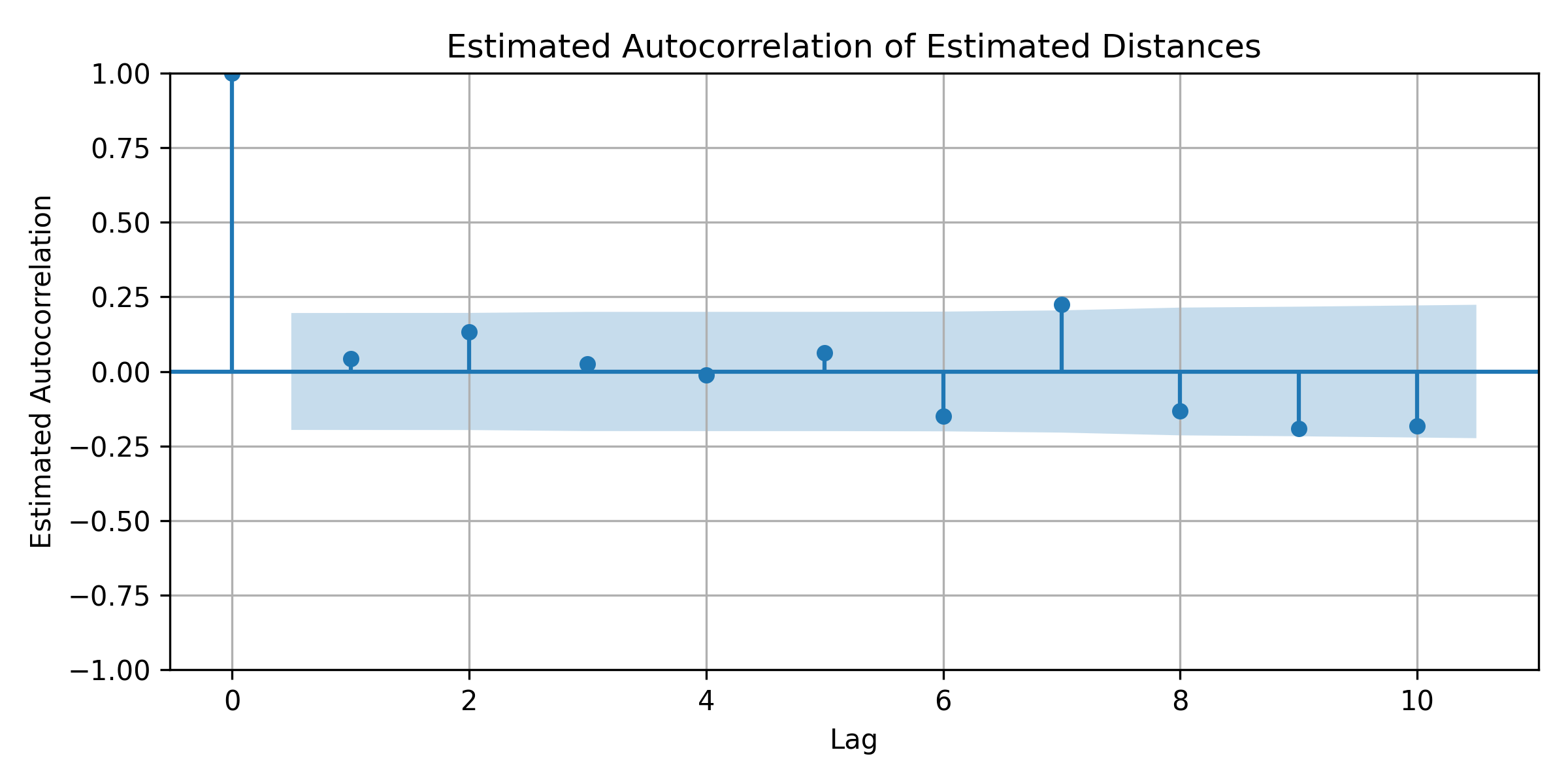}
        \caption{Estimated autocorrelations}
    \end{subfigure}
    \hfill
    \begin{subfigure}[b]{0.3\textwidth}
        \centering
        \includegraphics[height=3cm]{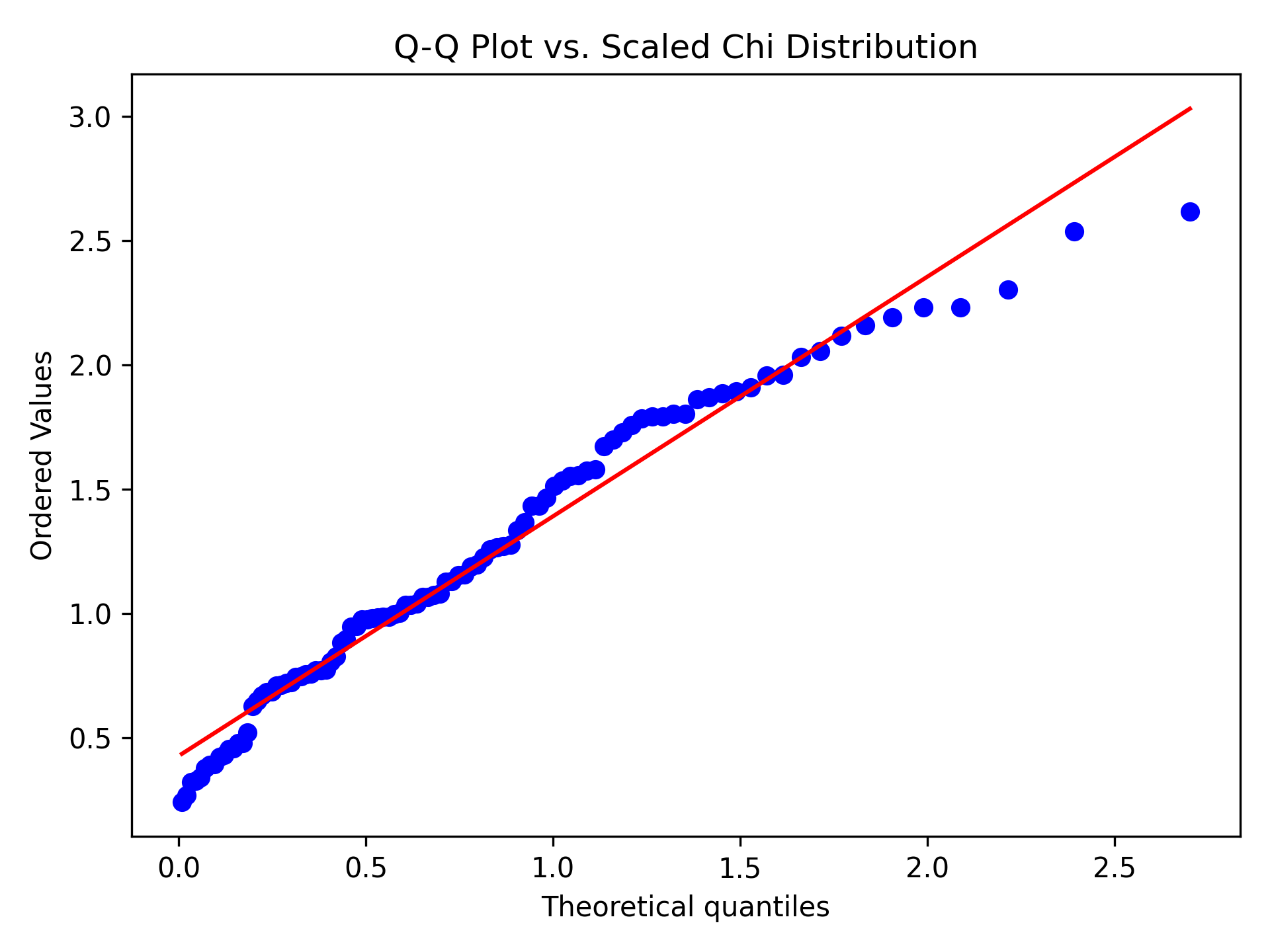}
        \caption{Q-Q Plot of $\sigma\chi_{1}$ vs $||Y_t - \hat{\pi}(Y_t)||_2$}
    \end{subfigure}
    \caption{Top row: (a) last 100 true deviations from the 2-sphere manifold, i.e.,  \(||Y_t - \pi(Y_t)||_2\), (b) estimated autocorrelation of the true deviations, (c) Q–Q plot comparing \(\sigma\chi_{1}\) to \(||Y_t - \pi(Y_t)||_2\).  Bottom row: (d) last 100 estimated deviations from the manifold, i.e. $||Y_t - \hat{\pi}(Y_t)||_2$,  (e) their estimated autocorrelation, (f) Q–Q plot comparing \(\sigma\chi_{1}\) to the estimated deviations \(||Y_t - \hat{\pi}(Y_t)||_2\). The true deviations are i.i.d. as shown in Appendix \ref{AppStcProcess} (supp. materials) and match a $\sigma\chi_{1}$ distribution as confirmed by the Kolmogorov–Smirnov (KS) test (p-value = 0.80). In contrast, the estimated deviations show mild autocorrelation and deviate from a $\sigma\chi_{1}$ distribution (KS p-value of 0.0.). }
    \label{fig:distances_summary}
\end{figure}
\endgroup
Figure~\ref{fig:process_on_sphere} shows the trajectories $\{X_t\}$, $\{Y_t\}$, and $\{\hat{\pi}(Y_t)\}$ for these final 100 steps. Although $\{Y_t\}$ lies outside $\mathcal{M}$, its projection $\{\hat{\pi}(Y_t)\}$ remains close to $\mathcal{M}$. 
Figure \ref{fig:distances_summary} shows that $||Y_t-\hat{\pi}(Y_t)||_2$ do not follow a $\sigma\chi_{1}$ distribution in contrast to the true distances. This justifies the need for a distribution-free control chart for monitoring the estimated deviations from the fitted manifold.
\begingroup
\spacingset{1.2}
 \begin{figure}[htbp!]
    \centering
    \begin{subfigure}[b]{0.45\textwidth}
        \centering
        \includegraphics[width=\textwidth]{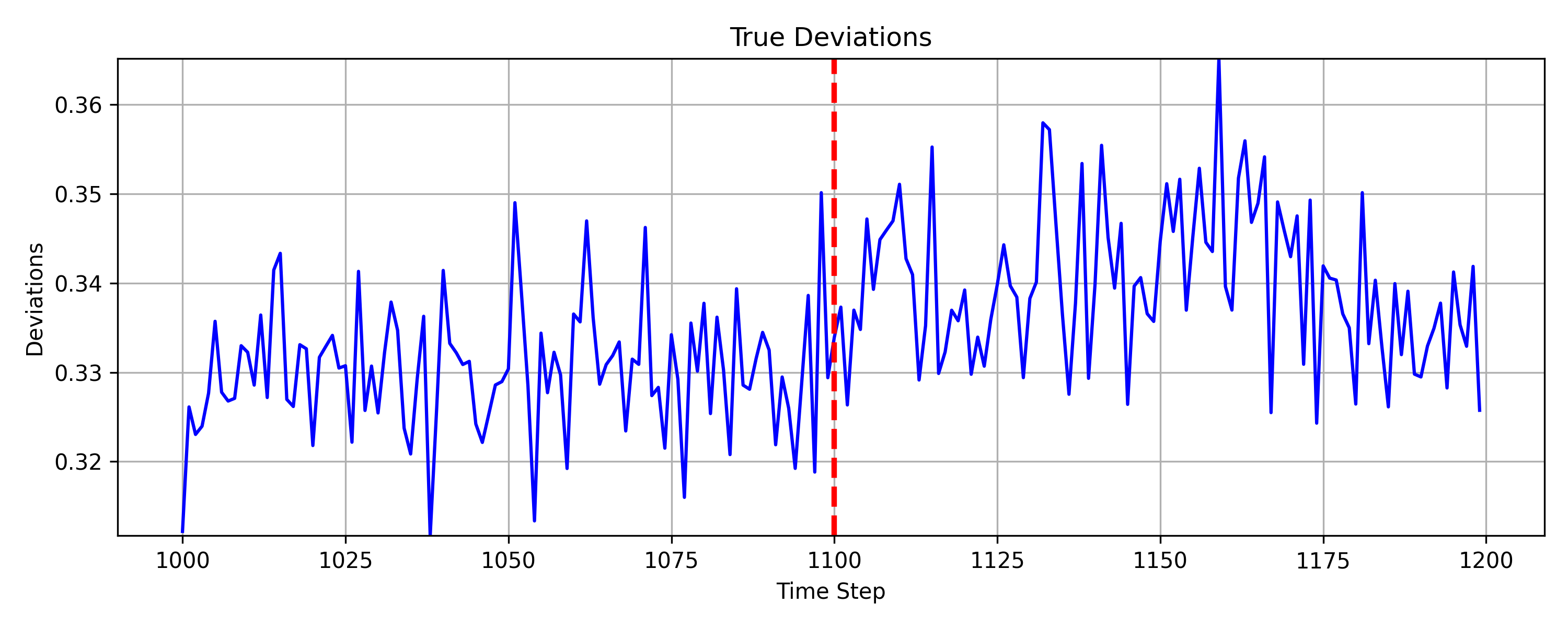}
        \caption{True deviations of the last 200 observations}
    \end{subfigure}
    \hfill
    \begin{subfigure}[b]{0.45\textwidth}
        \centering
        \includegraphics[width=\textwidth]{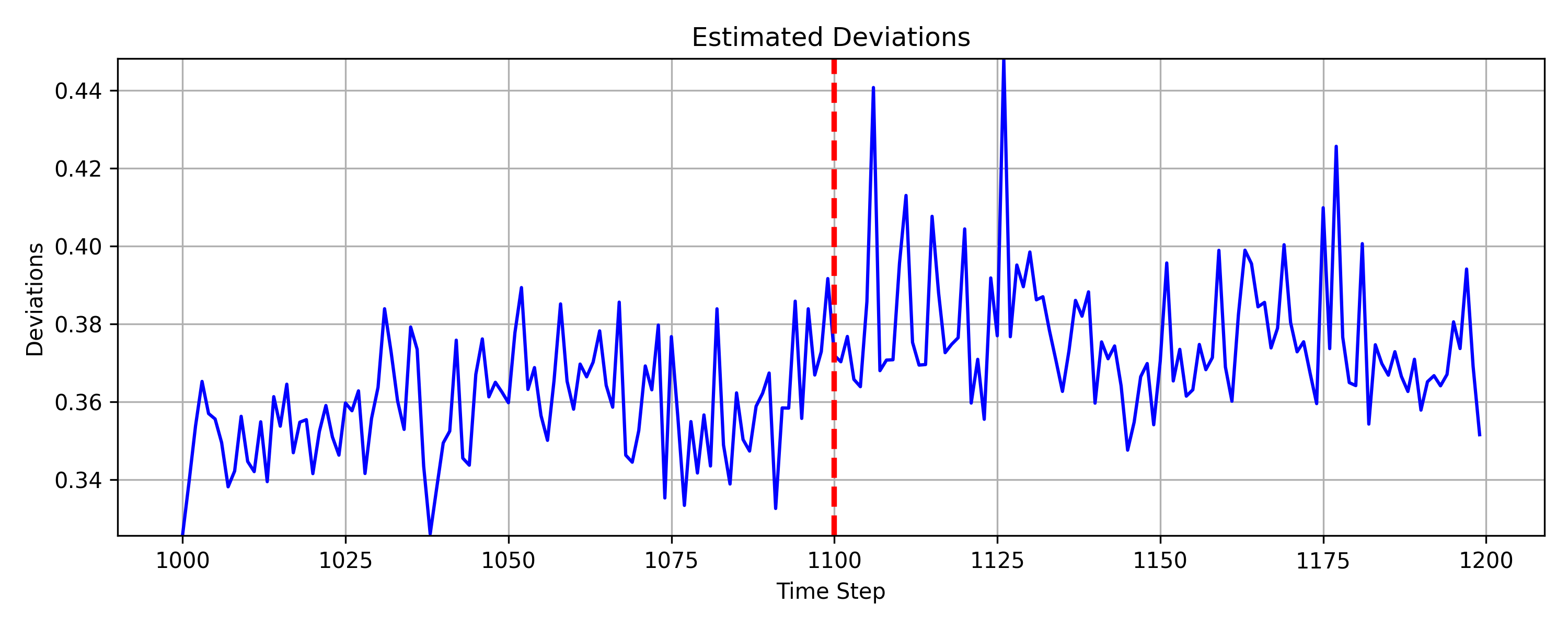}
        \caption{Estimated deviations of the last 200 observations}
    \end{subfigure}
    \hfill
    \caption{True and estimated deviations from the 2-sphere manifold for the last 200 observations, where $\sigma_x = 0.3$, $\sigma = 0.01$. We set $C_0 = C_2 = 50$, and $C_1 = 35$ in Algorithms ~\ref{alg:contraction_direction}, \ref{alg:contract_point}, and \ref{alg:estimate_sigma_residual}. Similarly, Algorithm \ref{alg:estimate_sigma_residual} was initialized with $\sigma^{(0)} = 0.05$ and yielded an estimated value of $\hat{\sigma} = 0.01073$ . The vertical dashed line indicates the time point at which the mean shift is introduced. Both distances increase after the $1100^{\text{th}}$ observation due to the applied mean shift.}
\label{fig:high_dimensional_process_on_sphere}
\end{figure}
\endgroup
Next, we consider an extreme case where \( m < D \). We generated a single realization of \( Y_t \) with $d=2$ and $D=1100$ over 1200 time steps, of which first $m=1000$ is used to fit a manifold in $\mathbb{R}^D$. After the \( 1100^{\text{th}} \) observation, we introduce a mean shift \( \Delta \in \mathbb{R}^{1100} \), such that the first component of \( \Delta \) equals \( 10\sigma \), while all other components are zero. Figure~\ref{fig:high_dimensional_process_on_sphere} shows the true and estimated deviations for the last 200 observations. Both increase after \( 1100^{\text{th}} \) observation due to the mean shift.

\subsection{A Distribution-free Univariate Control Chart for the Deviations from Manifold}\label{control_chart}

In this subsection, we construct a distribution-free EWMA (Exponentially Weighted Moving Average) control chart using a rolling window of size $w$ and a weighting parameter $\lambda \in (0,1)$ to monitor the estimated deviations from the manifold $\mathcal{M}$. We refer to this new chart as the Univariate Deviation From Manifold (UDFM) control chart. It is inspired on the multivariate ``DFEWMA" chart developed by \cite{chen2016distribution}. While the DFEWMA chart is intended to detect mean shifts in any direction, our setting only requires detecting deviations away from $\mathcal{M}$, making negative mean shifts irrelevant. Consequently, we modify the test statistic from \cite{chen2016distribution} (see also \cite{zhao2021intrinsic} who corrected this statistic) to improve the sensitivity of the UDFM control chart to positive mean shifts. Although the estimated deviations may display slight temporal dependencies, as shown in the previous subsection, we will still formulate UDFM under the assumption of i.i.d. observations, since the estimated deviations can easily be prewhitened if necessary.

We will first describe a two-sample test. Let $V_{-m+1}, \dots, V_{0} \overset{\text{i.i.d.}}{\sim} G$ and $U_1, \dots, U_n \overset{\text{i.i.d.}}{\sim} H$, where $G$ and $H$ are continuous distributions, and the two samples are independent. The hypothesis being tested is:
$H_0: \delta = 0 \quad \text{vs.} \quad H_1: \delta > 0$,
where we assume $H(x) = G(x - \delta)$ for all $x \in \mathbb{R}$, with $\delta > 0$ under $H_1$.
Let $N = m + n$, and let $S_{m,n} := \{U_{-m+1}, \dots, U_0, V_1, \dots, V_n\}$ be the pooled sample. Denote the ranks by $R_j$ for $j\in \{-m+1,...,n\}$ in the pooled sample. Define
$$ Z_j := \frac{1}{m+n} \max\left(0, R_j - \frac{m+n+1}{2} \right),$$
and $\mu_N=\mathbb{E}[Z_j | H_0]$ and $\sigma^2_N = \operatorname{Var}(Z_j | H_0)$. Then, the test statistic is defined as
$$T_{m,n} := \frac{\sum_{j=1}^{n}( Z_j - \mu_N)}{\sigma_N\sqrt{n(1-\frac{n-1}{m+n-1})}},$$
Under $H_0$, the exact distribution of $T_{m,n}$ can be determined since $Z_j$ depends only on $R_j$, whose distribution is known. However, we instead design a permutation test, as it is otherwise difficult to keep track of the distribution of $T_{m,n}$ on an EWMA control chart with our charting procedure explained later. Thus, let $T^*_{m,n}$ denote the test statistic obtained from uniformly permuted labeling of $S_{m,n}$.
\begin{theorem}
Let $c_{m,n}(\alpha)$ be the $\alpha$ upper quantile of the distribution of $T^*_{m,n}$ given $S_{m,n}$. Suppose $m$ and $n$ are large enough that the exact $\alpha$ upper quantile is well defined. Then,
\begin{enumerate}[label=(\alph*)]
    \item Under $H_0$, $\mathbb{P}\bigg(T_{m,n} \geq c_{m,n}(\alpha) \bigg) = \alpha$.
    \item Under $H_1$, $\mathbb{P}\bigg(T_{m,n} \geq c_{m,n} (\alpha)\bigg) \to 1$ if $m,n \to \infty$ and $\frac{n}{N}\to \gamma \in (0,1)$
\end{enumerate}
\end{theorem}
\begin{proof} See Appendix \ref{AppTheorem1} (supp. materials).
\end{proof}

We now describe how to sequentialize the two-sample test as an EWMA control chart. Suppose $m$ in-control observations are available, followed by $n$ additional observations. Let the charting statistic \( T_{n,w,\lambda} \), with a window size $w$ and a weighting parameter $\lambda\in (0,1)$, be defined at the $n^{th}$ observation as
$$T_{n,w,\lambda} = \frac{\sum_{j=n-w+1}^{n}(1-\lambda)^{n-j}\left( Z_{j,n} - \mathbb{E}[Z_{j,n}] \right)}{\sqrt{\operatorname{Var}\left( \sum_{j=n-w+1}^{n} (1-\lambda)^{n-j} Z_{j,n} \right)}},$$
where
$$Z_{j,n} = \frac{1}{m+n} \max\left( 0, R_{j,n} - \frac{m+n+1}{2} \right),$$
and $R_{j,n}$ denotes the rank of $Y_j$ in the pooled sample $S_n=\{ U_{-m+1},..., U_{0},V_1...,V_n\}$.

We propose choosing the control limit $c_{n}(\alpha)$ of $T_{n,w,\lambda}$, similar to \cite{chen2016distribution}, to keep the false alarm rate of the control chart at a desired level, defined by
\begin{equation}\label{eq:UDFM_eqs}
        \mathbb{P}\bigg(T_{n,w,\lambda}\geq  c_{n}(\alpha) \mid S_n,  T_{k,w,\lambda} \leq c_{k}(\alpha) \text{ for } 1 \leq k \leq n-1\bigg) = \alpha.
\end{equation}
We then define the run length (RL) of the control chart as:
$RL = \min\{n : T_{n,w,\lambda} > c_{n}(\alpha), \, n \geq 1\}.$
\begin{proposition}
Under $H_0$, the run length follows a geometric distribution:
$\mathbb{P}(RL = n) = \alpha(1 - \alpha)^{n - 1}, \quad \text{for all } n \geq 1.$
\end{proposition}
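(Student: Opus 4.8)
The plan is to show that the chart's conditional false-alarm probability is exactly $\alpha$ at every step, and then to unfold the run-length distribution by the multiplication rule together with an induction. First I would introduce the survival events $\mathcal{A}_n := \bigcap_{k=1}^{n}\{T_{k,w,\lambda}\leq c_k(\alpha)\}$ (no alarm through time $n$), with the convention $\mathcal{A}_0 = \Omega$, so that $\{RL>n\} = \mathcal{A}_n$ and $\{RL=n\} = \mathcal{A}_{n-1}\cap\{T_{n,w,\lambda}> c_n(\alpha)\}$. The defining property \eqref{eq:UDFM_eqs} of the control limit states precisely that, conditional on the pooled sample $S_n$ and on the no-previous-alarm event $\mathcal{A}_{n-1}$, the probability of signaling at time $n$ equals $\alpha$.

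The first key step is to remove the conditioning on $S_n$. Under $H_0$ the observations are i.i.d.\ hence exchangeable, and the relevant randomness in $\mathbb{P}(\cdot \mid S_n)$ is the uniform random labeling of the pooled values underlying the permutation quantile $c_n(\alpha)$; crucially, $\mathcal{A}_{n-1}$ is \emph{not} $\sigma(S_n)$-measurable, since it depends on this labeling. I would then apply the tower property: writing $B_n = \{T_{n,w,\lambda}\geq c_n(\alpha)\}$ and $A=\mathcal{A}_{n-1}$,
\[
\mathbb{P}(B_n\cap A) = \mathbb{E}\big[\mathbb{P}(B_n\cap A\mid S_n)\big] = \mathbb{E}\big[\mathbb{P}(B_n\mid A, S_n)\,\mathbb{P}(A\mid S_n)\big] = \alpha\,\mathbb{E}\big[\mathbb{P}(A\mid S_n)\big] = \alpha\,\mathbb{P}(A),
\]
where the third equality uses \eqref{eq:UDFM_eqs}, i.e.\ $\mathbb{P}(B_n\mid A,S_n)=\alpha$ for almost every realization of $S_n$. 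Dividing by $\mathbb{P}(A)$ gives $\mathbb{P}(B_n\mid\mathcal{A}_{n-1})=\alpha$, the false-alarm rate at step $n$ averaged over $S_n$. Under the continuity of $G$ and $H$, ties occur with probability zero, so the distinction between $\{T_{n,w,\lambda}\geq c_n(\alpha)\}$ and $\{T_{n,w,\lambda}> c_n(\alpha)\}$ is immaterial.

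With the per-step false-alarm rate pinned at $\alpha$, the second step is an induction on $n$. From $\mathcal{A}_n = \mathcal{A}_{n-1}\cap\{T_{n,w,\lambda}\leq c_n(\alpha)\}$ and the previous display,
\[
\mathbb{P}(\mathcal{A}_n) = \mathbb{P}\big(\{T_{n,w,\lambda}\leq c_n(\alpha)\}\mid \mathcal{A}_{n-1}\big)\,\mathbb{P}(\mathcal{A}_{n-1}) = (1-\alpha)\,\mathbb{P}(\mathcal{A}_{n-1}).
\]
Starting from $\mathbb{P}(\mathcal{A}_0)=1$, and checking the base case $n=1$ (where the conditioning event in \eqref{eq:UDFM_eqs} is vacuous so that $\mathbb{P}(B_1)=\alpha$), this yields $\mathbb{P}(\mathcal{A}_n)=(1-\alpha)^n$ for all $n\geq 0$. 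Finally,
\[
\mathbb{P}(RL=n) = \mathbb{P}(B_n\mid\mathcal{A}_{n-1})\,\mathbb{P}(\mathcal{A}_{n-1}) = \alpha\,(1-\alpha)^{n-1},
\]
which is the claimed geometric law.

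The main obstacle, and the step I would spell out most carefully, is the removal of the conditioning on $S_n$: one must be precise that \eqref{eq:UDFM_eqs} holds for (almost) every value of the pooled sample, and that $\mathcal{A}_{n-1}$ genuinely lives on the permutation randomness rather than on $\sigma(S_n)$, so that the factorization $\mathbb{P}(B_n\mid A,S_n)=\alpha$ is legitimate and survives the expectation over $S_n$. Everything else—the exchangeability justifying that the permutation law is the correct conditional law under $H_0$, and the telescoping induction—is routine once this point is settled.
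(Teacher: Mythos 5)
Your proof is correct and follows essentially the same route as the paper: the paper omits the argument, deferring to Theorem 1 of \cite{chen2016distribution}, and that argument is exactly yours --- the control limit is defined through \eqref{eq:UDFM_eqs} so that the conditional signal probability given $S_n$ and the no-prior-alarm event equals $\alpha$, after which the tower property and a telescoping product give the geometric law. The one blemish, inherited from the paper's own notation, is the mismatch between the $\geq$ in \eqref{eq:UDFM_eqs} and the strict inequality in the definition of $RL$; your appeal to continuity of $G$ and $H$ does not fully dispose of it (the permutation distribution of a rank-based statistic is discrete, so exact ties with $c_n(\alpha)$ can have positive conditional probability), but this is a wrinkle in the paper's statement rather than a flaw in your reasoning.
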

\begin{proof}
    The proof follows that of Theorem 1 in \cite{chen2016distribution} and is omitted.
\end{proof}

\begin{proposition}\label{prop:conditional-equality}
$\mathbb{P}(T_{n,w,\lambda} \leq c_{n}(\alpha) \mid S_n,T_{k,w,\lambda} \leq c_{k}(\alpha) \text{ for } 1 \leq k \leq n-1) \; = $
\\
$\mathbb{P}(T_{n,w,\lambda} \leq c_{n}(\alpha) \mid S_n,T_{k,w,\lambda} \leq c_{k}(\alpha) \text{ for } n-w \leq k \leq n-1) 
$
\end{proposition}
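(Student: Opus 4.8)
The plan is to read the claim as a finite-memory (Markov-type) property of the chart and to prove it by conditioning on the pooled sample $S_n$ and exploiting the exchangeability of the ranks under $H_0$. Write $\mathcal{A}=\{T_{k,w,\lambda}\le c_k(\alpha):1\le k\le n-w-1\}$ for the ``dropped'' events and $\mathcal{B}=\{T_{k,w,\lambda}\le c_k(\alpha):n-w\le k\le n-1\}$ for the ``retained'' ones, so that the assertion is equivalent to the conditional independence $\{T_{n,w,\lambda}\le c_n(\alpha)\}\perp\mathcal{A}$ given $(S_n,\mathcal{B})$. The first step is a structural lemma describing, given $S_n$, exactly which ranks each statistic sees. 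Since the deviations are continuous, there are no ties a.s., and conditioning on $S_n$ makes the rank vector uniformly distributed; writing $r_j:=R_{j,n}$ for the rank of $V_j$ in $S_n$ and using $S_k=S_n\setminus\{V_{k+1},\dots,V_n\}$, the rank of $V_j$ in $S_k$ equals $r_j-\#\{i\in\{k+1,\dots,n\}:r_i<r_j\}$. Hence $T_{k,w,\lambda}$ is a function of the \emph{ordered} window ranks $(r_{k-w+1},\dots,r_k)$ together with the \emph{unordered} thinning set $\{r_{k+1},\dots,r_n\}$, whereas $T_{n,w,\lambda}$ is a function of the ordered window ranks $(r_{n-w+1},\dots,r_n)$ alone.

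With this lemma the central step is to condition on the $\sigma$-algebra $\mathcal{C}$ generated by the ordered ranks $(r_1,\dots,r_{n-w})$ and the unordered window content $\{r_{n-w+1},\dots,r_n\}$. Given $\mathcal{C}$, exchangeability makes the arrangement of $\{r_{n-w+1},\dots,r_n\}$ across positions $n-w+1,\dots,n$ uniform; moreover each dropped statistic $T_{k,w,\lambda}$ with $k\le n-w-1$ has its window inside $\{1,\dots,n-w-1\}$ and a thinning set recoverable from $(r_1,\dots,r_{n-w})$ and the window content, so $\mathcal{A}$ is $\mathcal{C}$-measurable. By contrast $T_{n,w,\lambda}$ and the retained events $\mathcal{B}$ are functions of this uniform within-window arrangement together with $\mathcal{C}$. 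I would then integrate out the arrangement, the target being the factorization
\[
\mathbb{P}(T_{n,w,\lambda}\le c_n(\alpha),\mathcal{A},\mathcal{B}\mid S_n)\,\mathbb{P}(\mathcal{B}\mid S_n)=\mathbb{P}(T_{n,w,\lambda}\le c_n(\alpha),\mathcal{B}\mid S_n)\,\mathbb{P}(\mathcal{A},\mathcal{B}\mid S_n),
\]
which is exactly the stated conditional equality once divided through; each factor is rewritten as a $\mathcal{C}$-average in which $\mathcal{A}$ is fixed while the within-window arrangement driving $T_{n,w,\lambda}$ is uniform.

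The main obstacle is precisely the passage from ``conditionally on $\mathcal{C}$'' to ``conditionally on $(S_n,\mathcal{B})$''. Both $\mathcal{A}$ and $\mathcal{B}$ depend on the ranks in the overlap region $\{n-2w+1,\dots,n-w\}$ — these enter $\mathcal{A}$ through its windows and enter $\mathcal{B}$ through the windows of $T_{n-w,w,\lambda},\dots,T_{n-1,w,\lambda}$ — so conditioning on $\mathcal{A}$ reshapes the distribution of these overlap ranks and, through the constraint that $\mathcal{B}$ imposes, could a priori reshape the within-window arrangement that determines $T_{n,w,\lambda}$. The heart of the argument is to rule this out: because $T_{n,w,\lambda}$ depends on \emph{none} of the overlap ranks but only on $(r_{n-w+1},\dots,r_n)$, one shows that after conditioning on the unordered window content the within-window arrangement is exchangeable and independent of everything indexed at or before position $n-w$, so the overlap ranks cancel in the displayed identity. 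This bookkeeping — verifying that the retained block $\mathcal{B}$ ``d-separates'' $T_{n,w,\lambda}$ from the older block $\mathcal{A}$ despite their shared dependence on the overlap ranks — is the delicate part of the proof, and it parallels the argument underlying Theorem~1 of \cite{chen2016distribution}, on which the geometric run-length result already relies; I would expect to use the self-referential definition of the control limits $c_k(\alpha)$ to close the induction.
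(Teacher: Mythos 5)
Your setup is correct as far as it goes: conditionally on $S_n$ the rank vector is exchangeable, $T_{n,w,\lambda}$ is a function of the ordered window ranks $(r_{n-w+1},\dots,r_n)$ alone, each dropped statistic is a function of the ordered old ranks plus the \emph{unordered} window content, so $\mathcal{A}$ is indeed $\mathcal{C}$-measurable, and the within-window arrangement $\Pi$ is uniform and independent of $\mathcal{C}$. But the proof stops exactly where the proposition lives. Write $f(\mathcal{C})=\mathbb{P}(T_{n,w,\lambda}\le c_n(\alpha),\,\mathcal{B}\mid\mathcal{C})$ and $g(\mathcal{C})=\mathbb{P}(\mathcal{B}\mid\mathcal{C})$; your target factorization is $\mathbb{E}[\mathbf{1}_{\mathcal{A}}f]\,\mathbb{E}[g]=\mathbb{E}[f]\,\mathbb{E}[\mathbf{1}_{\mathcal{A}}g]$ (expectations conditional on $S_n$), and this does \emph{not} follow from ``$\mathcal{A}$ is $\mathcal{C}$-measurable and $\Pi$ is uniform given $\mathcal{C}$.'' It would follow if $f/g$ were almost surely constant, i.e.\ if $\mathbb{P}(T_{n,w,\lambda}\le c_n(\alpha)\mid\mathcal{C},\mathcal{B})$ did not depend on $\mathcal{C}$; but $\mathcal{B}$ depends jointly on $\mathcal{C}$ (through the overlap ranks $r_{n-2w+1},\dots,r_{n-w}$) and on $\Pi$, so conditioning on $\mathcal{B}$ couples $\Pi$ with $\mathcal{C}$ --- precisely the explaining-away effect you flag and then wave off. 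Exchangeability alone cannot rule it out: take independent fair coins $X\perp\Pi$, set $\mathcal{A}=\{X=1\}$, $\mathcal{B}=\{X=\Pi\}$, $T=\Pi$; then $\mathcal{A}$ is measurable with respect to the ``old'' variable and $\Pi$ is independent of it, yet $\mathbb{P}(T=1\mid\mathcal{B})=1/2$ while $\mathbb{P}(T=1\mid\mathcal{A},\mathcal{B})=1$. Your dependency structure ($\mathcal{A}=a(\mathcal{C})$, $\mathcal{B}=b(\mathcal{C},\Pi)$, $T_n=t(\mathcal{C},\Pi)$, $\Pi\perp\mathcal{C}$) is exactly of this form, so the ``delicate part'' is not bookkeeping to be verified --- it is the entire content of the proposition, and it must exploit finer properties of the rank statistics and of the events $\{T_{k,w,\lambda}\le c_k(\alpha)\}$ that your sketch never invokes. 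As written, the proposal assumes what is to be proved.

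For comparison, the paper does not attempt any of this: its proof is a one-line reduction observing that the charting statistic differs from that of \cite{chen2016distribution} only in replacing their score by the one-sided score $Z_{j,n}$, which is still a function of the rank $R_{j,n}$ alone, so Proposition~3 of \cite{chen2016distribution} and its proof carry over verbatim. Note also that your closing appeal points at the wrong result: Theorem~1 of \cite{chen2016distribution} underlies the geometric run-length statement (Proposition~1 of this paper), whereas the window-truncation property you are proving corresponds to their Proposition~3. A self-contained proof would need either to reproduce that argument or to supply the missing conditional-independence step identified above.
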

\begin{proof}
    The proof follows from that of Proposition 3 in \cite{chen2016distribution}, since $Z_{j,n}$ depends only on $R_{j,n}$.
\end{proof}
By Proposition~\ref{prop:conditional-equality}, it suffices to consider only the last $w$ charting statistics when approximating $c_{n}(\alpha)$. This results in a reduced computational in (\ref{eq:UDFM_eqs}). The moments required to construct the UDFM control chart are given in Appendix \ref{AppUDFmoments} (supp. materials).

\subsection{Detection Based on Manifold Fitting Framework}

Assume $m$ Phase I observations are  collected. These observations define a region $\Gamma=\{y \in {\mathbb{R}^D}|\text{dist}(y,\mathcal{M})<C\sigma\}$ from which a manifold $\hat{\mathcal{M}}$ can be fitted, assuming that $\sigma$ is sufficiently small and $\operatorname{rch}(\mathcal{M})$ is large. To do this, when the true value of $\sigma$ is known, Algorithms ~\ref{alg:contraction_direction} and~\ref{alg:contract_point} can be initialized directly with appropriate constants, ensuring that each Euclidean ball and hypercylinder are adequately populated. In cases where $\sigma$ is unknown, it can be estimated first using Algorithm~\ref{alg:estimate_sigma_residual}.

Next, $m_1$ in-control (IC) deviations from the fitted manifold are estimated. Since these might be autocorrelated, we fit a (univariate) autoregressive (AR) model to them. After fitting the AR model, filtered values of $m_2$ IC deviations are passed into the UDFM control chart calibrated to a desired false alarm rate $\alpha$. At least 10 IC deviations and a window size $w \geq 5$ is recommended, as smaller values may prevent the approximated control limit $c_{n}(\alpha)$ from achieving the desired false alarm rate $\alpha$.

Once the control chart is set up, monitoring proceeds by sequentially collecting and observing new data, estimating the distance of each new observation to the manifold, filtering the estimated distance using the same AR model, and then inputting the resulting forecast residual into the UDFM control chart. This process is repeated until an alarm is signaled by the UDFM control chart. The algorithmic formulation of the monitoring procedure is given below.
\begingroup
\spacingset{1.2}
\begin{algorithm}[htbp!]
\caption{Manifold fitting Phase II SPC Framework}
\label{alg:mf_spc}
\begin{algorithmic}[1]
\REQUIRE $m$ Phase I observations $\{Y_t \in \mathbb{R}^D\}_{t=-m+1}^0$, constants $C_0,C_1,C_2$, false alarm rate $\alpha$, window size $w$

\STATE \textbf{Phase I: Manifold Fitting}
\STATE Fit $\hat{\mathcal{M}} \subset \mathbb{R}^D$ with the Phase I data.
\STATE Compute $m_1$ estimated IC deviations and fit an AR model.
\STATE Compute $m_2$ estimated IC deviations and filter them via the AR model.
\STATE Feed filtered quantities into the UDFM control chart with false alarm rate $\alpha$.

\STATE \textbf{Phase II: Online Monitoring}
\WHILE{no alarm signaled by the UDFM control chart}
    \STATE Collect and observe a new data point.
    \STATE Estimate the deviation of the new observation from $\hat{\mathcal{M}}$.
    \STATE Filter the estimated deviation using the same AR model.
    \STATE Input the resulting forecast residual into the UDFM control chart.
\ENDWHILE
\end{algorithmic}
\end{algorithm}
\endgroup

\section{Manifold Learning Based SPC}\label{manifold_learning}

In this section, we propose an alternative framework for the SPC problem defined in (\ref{prob_definition}). This method monitors the lower-dimensional representation of the process by approximating an embedding function $\hat{f}$ using Phase I observations, in line with prior work by \cite{li2015}, \cite{zhan2019improved}, \cite{cui2022nonparametric}, and \cite{ma2015fault}. Unlike earlier approaches, we filter temporal dependencies in the lower-dimensional space, which allows us to design a monitoring scheme with controllable $\text{ARL}_{in}$. We begin by introducing key concepts of this framework.

Suppose that there is a twice-differentiable function $h: \mathcal{M} \to \mathbb{R}$ defined on $\mathcal{M}$. For any nearby points $x, z \in \mathcal{M}$, \cite{belkin2001laplacian} showed that:
$$|h(x) - h(z)| \approx |\langle \nabla h(x), ||z - x|| \rangle| \leq \|\nabla h(x)\| \cdot \|x - z\|$$
This implies that the difference in function values is locally controlled by the gradient norm $\|\nabla h(x)\|$. Therefore, in order to preserve locality (meaning that nearby points on the manifold remain close after mapping to $\mathbb{R}$), one can seek a function that minimizes the total variation of $h$. This leads to the objective given in \cite{belkin2001laplacian}:
$$\arg\min_{\|h\|_{L^2(\mathcal{M})} = 1} \int_{\mathcal{M}} \|\nabla h\|^2$$
which is equivalent to
$$\int_{\mathcal{M}} \|\nabla h\|^2 = \int_{\mathcal{M}} \mathcal{L}(h)h$$
where $\mathcal{L}$ denotes the Laplace Beltrami operator acting on smooth functions defined on the manifold $\mathcal{M}$, i.e., $\mathcal{L}(h) = -\operatorname{div}(\nabla h)$. Thus, solving the locality preserving optimization problem above reduces to finding the eigenfunctions of the Laplace Beltrami operator on $\mathcal{M}$, which define an embedding \citep{perrault2011metric}.

This formulation establishes the foundation for Laplacian-based dimensionality reduction methods, including LPP and NPE. These Laplacian-based methods can be viewed as discrete approximations to the Laplace–Beltrami operator, constructed from finite sample data. Specifically, given a set of observations assumed to lie on or near a smooth manifold $\mathcal{M}$, one can construct an embedding into a lower-dimensional space $\mathbb{R}^d$. Not all Laplacian-based dimensionality reduction methods yield an explicit embedding function $\hat{f} : \mathcal{M} \to \mathbb{R}^d$, which poses a challenge for their use in online monitoring. However, if the form of $\hat{f}$ is restricted to be linear (as in LPP and NPE), it can be recovered and applied to new samples during on-line monitoring. It is important to note that a linear $\hat{f}$ does not imply these methods are linear manifold learning methods such as PCA. As shown by \cite{lpp}, the linear projections found via LPP are optimal linear mappings preserving the locality on a manifold, i.e., optimal linear approximations to the eigenfunctions of the Laplace-Beltrami operator $\mathcal{L}$. \cite{npe} show how NPE approximates the iterated graph Laplacian $\text{L}^2$ (L is regarded as a discrete approximation to $\mathcal{L}$ and converges to $\mathcal{L}$ under certain assumptions \citep{belkin2008towards}). In this sense, NPE finds a discrete approximation to the eigenfunctions of the iterated Laplacian $\mathcal{L}^2$. Therefore, both methods are capable of discovering the nonlinear manifold structure. More details regarding LPP and NPE can be found in Appendix \ref{AppManLearn} (supp. materials).

Suppose that we observe $m$ Phase~I samples $\{Y_t\}_{t = -m+1}^{0}$ in $\mathbb{R}^D$ and approximate an embedding function $\hat{f}$ using methods such as PCA, LPP or NPE. This allows us to represent $\{Y_t\}$ in $\mathbb{R}^d$ as
$y_t = \hat{f} (Y_t), \quad t = -m+1,-m+2, \dots,$ where $\{y_t\}$ is a stationary process. Thus, $\{Y_t\}$ reduces to the lower-dimensional process $\{y_t\}$. However, the lower-dimensional process still requires monitoring. One approach is to fit a time-series model, such as a Vector Autoregressive regression (VAR) in $\mathbb{R}^d$ (see \cite{papaioannou2022time}), and monitor forecast residuals online, but this typically requires many observations for parameter estimation. As a simpler alternative, we fit independent AR models to each dimension to filter autocorrelations. The filtered residuals approximate i.i.d. behavior, making the multivariate DFEWMA chart  \citep{chen2016distribution} suitable for monitoring $\{y_t\}$ (our findings in Section \ref{sec:toy_example} and ~\ref{tep_results} validate this practical assumption). 


\subsection{Detection Based on Manifold Learning Framework}


Assume as before that $m$ Phase I observations are collected. Both NPE and LPP yield an embedding function $\hat{f}$, obtained by solving a generalized eigenvalue problem, which becomes ill-posed when $m < D$, so in this method, {\em contrary to our manifold fitting method}, it is necessary to that $m > D$. Once $\hat{f}$ is obtained, $m_1$ observations are embedded into $\mathbb{R}^d$ and used to fit univariate AR models for each dimension. After the AR models are fitted, $m_2$ observations are embedded into $\mathbb{R}^d$, and their residuals are computed. These residuals are then used to construct the DFEWMA control chart for given false alarm rate $\alpha$,  weighting parameter $\lambda$, and window size $w$ \citep{chen2016distribution}.

Once the control chart is set up, monitoring proceeds by sequentially observing new data, embedding each new observation into $\mathbb{R}^d$ via $\hat{f}$, filtering the embedded data using the same AR models, and then inputting the resulting forecast residuals into the DFEWMA control chart. This process is repeated until an alarm is signaled by the DFEWMA control chart. The algorithmic formulation of the monitoring procedure is given below.
\begingroup
\spacingset{1.2}
\begin{algorithm}[htbp!]
\caption{Manifold Learning Phase II SPC Framework}
\label{alg:ml_spc}
\begin{algorithmic}[1]
\REQUIRE $m$ Phase I observations $\{Y_t\}_{t=-m+1}^0$, the intrinsic dimensionality $d$, false alarm rate $\alpha$, window size $w$

\STATE \textbf{Phase I: Manifold Learning}
\STATE Compute $\hat{f}$ by using $m$ Phase I observations.
\STATE Embed next $m_1$ observations into $\mathbb{R}^d$ by $\hat{f}$.
\STATE Fit AR models with the embedded $m_1$ observations
\STATE Embed next $m_2$ observations by $\hat{f}$ and compute their forecast residuals.
\STATE Set the DFEWMA control chart with the forecast residuals.

\STATE \textbf{Phase II: Online Monitoring}
\WHILE{no alarm signaled by UDFM control chart}
    \STATE Collect and observe a new data point.
    \STATE Embed new data point into $\mathbb{R}^d$ by $\hat{f}$.
    \STATE Compute forecast residuals using the same AR models.
    \STATE Input the resulting forecast residual into the DFEWMA.
\ENDWHILE
\end{algorithmic}
\end{algorithm}
\endgroup

\section{Performance Analysis}\label{performance_analysis}


\subsection{Synthetic Process on a 2-sphere  manifold}\label{sec:toy_example}

We generated a synthetic process as defined in (\ref{eq:synthetic_latent}) and (\ref{eq:synthetic_ambient}) with latent dimension $d=2$ and ambient dimension $D=6$. The parameters were set to $\sigma = 0.1$ and $\sigma_x = 0.3$, and $1200$ Phase I observations were simulated. After the $1200^{\text{th}}$ observation, we introduced a mean shift $\delta \sigma \in \mathbb{R}$, applied either to the first or fourth dimension of the ambient space. Note that $\mathcal{S}^2$ lies entirely within the first three dimensions of the ambient space, and we set $d = 3$ for the manifold learning algorithms.  Consequently, any faults introduced in the last 3 dimensions remain undetected with a manifold learning method for dimensionality reduction. To address this limitation, SPC monitoring procedures based on dimensionality reduction are typically supplemented with an additional control chart that tracks reconstruction errors. But then the two-charts  needs to be tuned to ensure an overall in-control ARL, and, in a sense, the two charts are monitoring the entire ambient space. 
In contrast, our proposed manifold fitting method avoids dimensionality reduction and instead monitors deviations from the fitted manifold directly in the ambient space with a {\em univariate} chart, eliminating the need for  two multivariate chart approach. 
\begingroup
\spacingset{1.2}
\begin{table}[htbp!]
\centering
\begin{tabular}{|c|c|c|c|c|c|}
\hline
\textbf{Fault Dimension} & $\delta$ & \textbf{MF} & \textbf{NPE} & \textbf{LPP} & \textbf{PCA} \\
\hline
-- & 0 & 20.92(21.05) & 19.73(19.59) & 19.91(20.03) & 19.79.41(19.82) \\
\hline
1 & 3 & 7.17(7.09) & 17.44(18.22) & 17.78(19.07) & 17.74(18.75) \\
\hline
1 & 10 & 2.74(1.85) & 8.94(11.16) & 9.05(11.74) & 9.15(11.86) \\
\hline
4 & 3 & 2.67(1.31) & 19.73(19.72) & 19.85(20.01) & 19.76(19.84) \\
\hline
4 & 10 & 1.99(0.67) & 19.26(19.49) & 19.75(19.93) & 19.72(19.73) \\
\hline
\end{tabular}
\caption{ARL values of the methods for various faults along different dimensions in the 2-sphere manifold example ($D=6$), with SDRL shown in parentheses. Each ARL is computed from 10,000 simulations. MF refers to the SPC procedure based on manifold fitting. In each simulation, the first $700$ Phase I observations were used to fit or learn the manifold, the next $400$ were used to fit AR(10) models to each dimension, and the final $100$ were used to set the control charts, following the procedures in Algorithms \ref{alg:mf_spc} and \ref{alg:ml_spc}. For manifold learning (LPP, NPE and PCA), we embedded into $\mathbb{R}^3$ and constructed graphs using 15 nearest neighbors. For manifold fitting, $\sigma$ was estimated and the parameters were fixed at $C_0=C_2=5$ and $C_1=3$. }
\label{tab:synthetic_results_hd}
\end{table}
\endgroup

Table \ref{tab:synthetic_results_hd} presents the ARL values for the different approaches in the synthetic process on a 2-sphere. For each method, ARL$_{\text{in}}$ is very close to the nominal target value of 20. As expected, the mean shift applied to the fourth dimension cannot be detected when monitoring only the lower-dimensional space obtained by the manifold learning methods.  In contrast, the SPC procedure based on manifold fitting (MF) not only detects all the considered shifts but also does so faster than the manifold learning–based SPC.

\subsection{A Replicated Tennessee Eastman (TE) Process}\label{tep_results}

The Tennessee Eastman Process Simulator is a continuous chemical process initially proposed in \cite{DOWNS1993245} as a benchmark for control engineering.
It is an open-loop system without feedback controllers, making it inherently unstable. 
We use instead the revised TE simulator of \cite{BATHELT2015309}, which includes feedback control loops. This TE simulator is shown in Figure \ref{tep}. 
As described in Appendix \ref{AppTE} (supp. materials) we further modified this TE simulator
and used 10 parallel copies of it to have available 300 variables.

We simulated a single realization of 10 TE simulators over 1,200 observations. The first 700 observations were used to estimate the intrinsic dimensionality $d$ with three methods from the \texttt{skdim} Python package \citep{bac2021scikit}, 
yielding estimates between 20 and 23. Figure~\ref{fig:pca_tep} presents the standard PCA scree plot and the cumulative explained variance. Based on these results, and supported by the PCA scree plot and cumulative explained variance in Figure~\ref{fig:pca_tep}, we set $d=22$. We then approximated $\hat{f}$ using NPE and LPP, both constructed with a 15-nearest neighbor graph. For LPP, the scale parameter of the heat kernel was set to the median of the pairwise distances \citep{lpp}.

\begingroup
\spacingset{1.2}
 \begin{figure}
 \centering
  \includegraphics[width=1\textwidth]{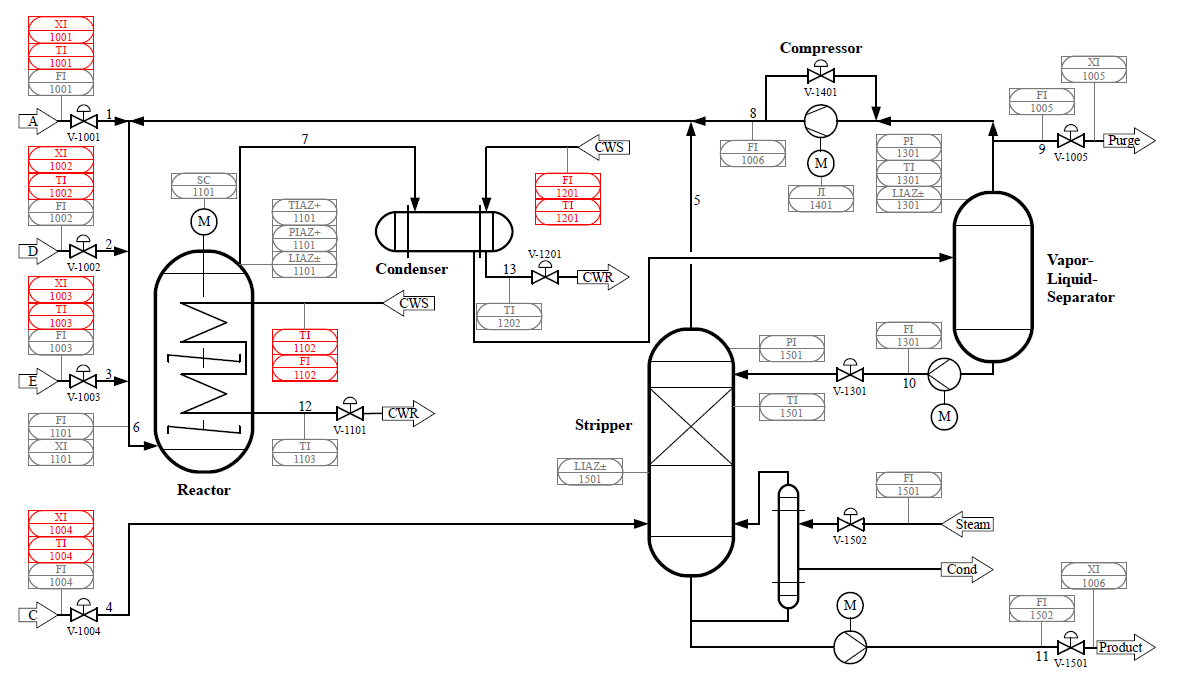}
 \caption{Tennesse Eastman Process from \cite{BATHELT2015309}. The process consists of five main unit operations: a reactor, a product condenser, a vapor-liquid separator, a recycle compressor, and a product stripper. Gaseous reactants A, C, D, and E are fed to produce two liquid products G and H. The feedback control loops are not illustrated here. Ten independent parallel TE simulators were used as described in Appendix D, supplementary materials.}
  \label{tep}
 \end{figure}
\endgroup

Next, we embedded the subsequent 400 observations into the lower-dimensional space $\mathbb{R}^{22}$ and fitted univariate AR(20) for each dimension independently. The remaining 100 observations were then embedded and filtered using the AR models. Figure~\ref{fig:p_values} reports the p-values from the Shapiro–Wilk normality test, applied separately to each dimension of the filtered quantities in the lower-dimensional space. The results indicate that forecast residuals do not follow a normal distribution for the nonlinear manifold learning methods, but are normal for PCA, as expected. We also applied the manifold fitting algorithm to the first 700 observations with parameters \(C_0 = C_2 = 22, C_1 = 14\), and estimated \(\sigma = 0.00037\) using Algorithm~\ref{alg:estimate_sigma_residual}.

\begingroup
\spacingset{1.2}
 \begin{figure}[htbp!]
    \centering
    \begin{subfigure}[b]{0.45\textwidth}
        \centering
        \includegraphics[width=\textwidth]{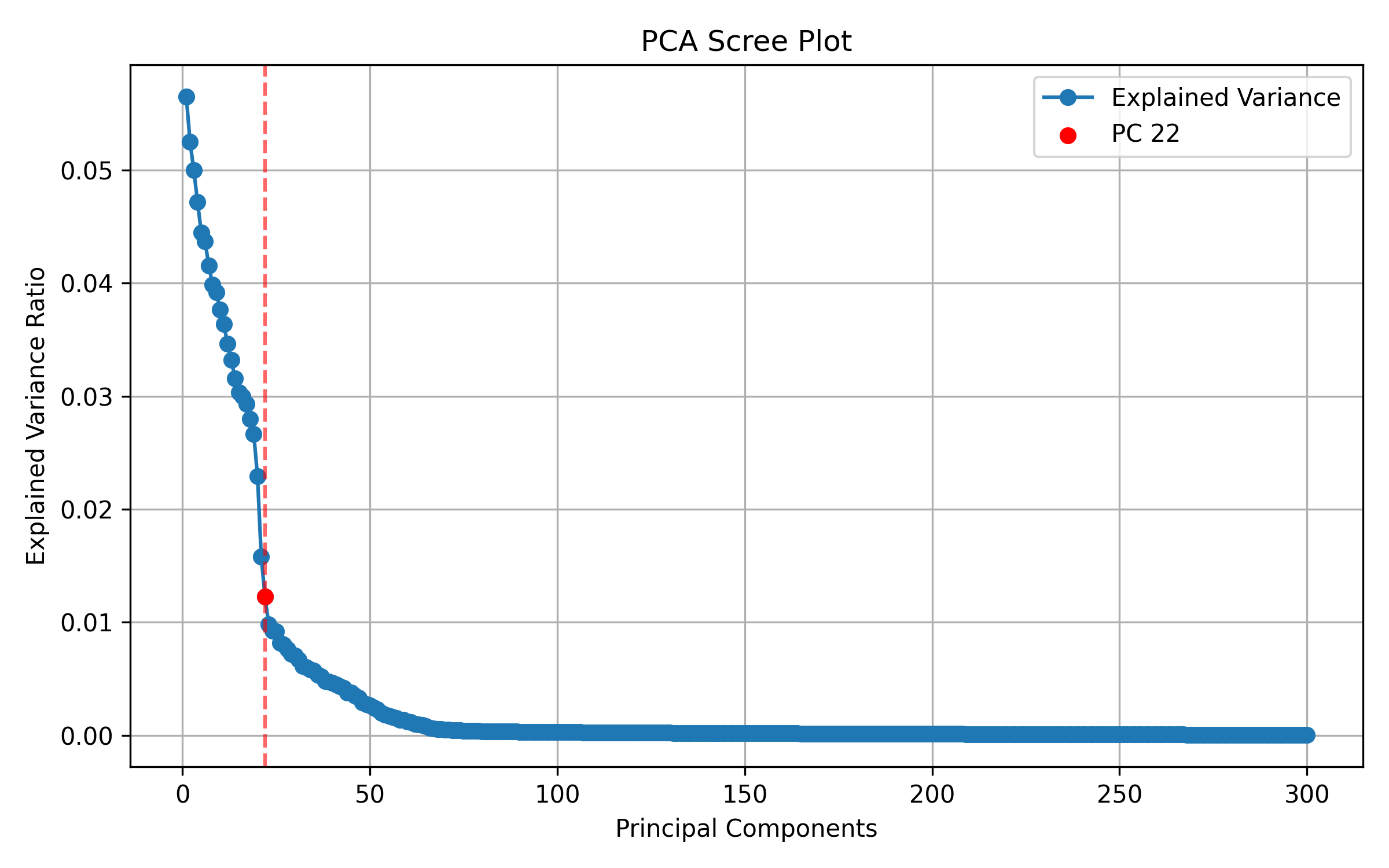}
        \caption{PCA Elbow Plot}
    \end{subfigure}
    \hfill
    \begin{subfigure}[b]{0.45\textwidth}
        \centering
        \includegraphics[width=\textwidth]{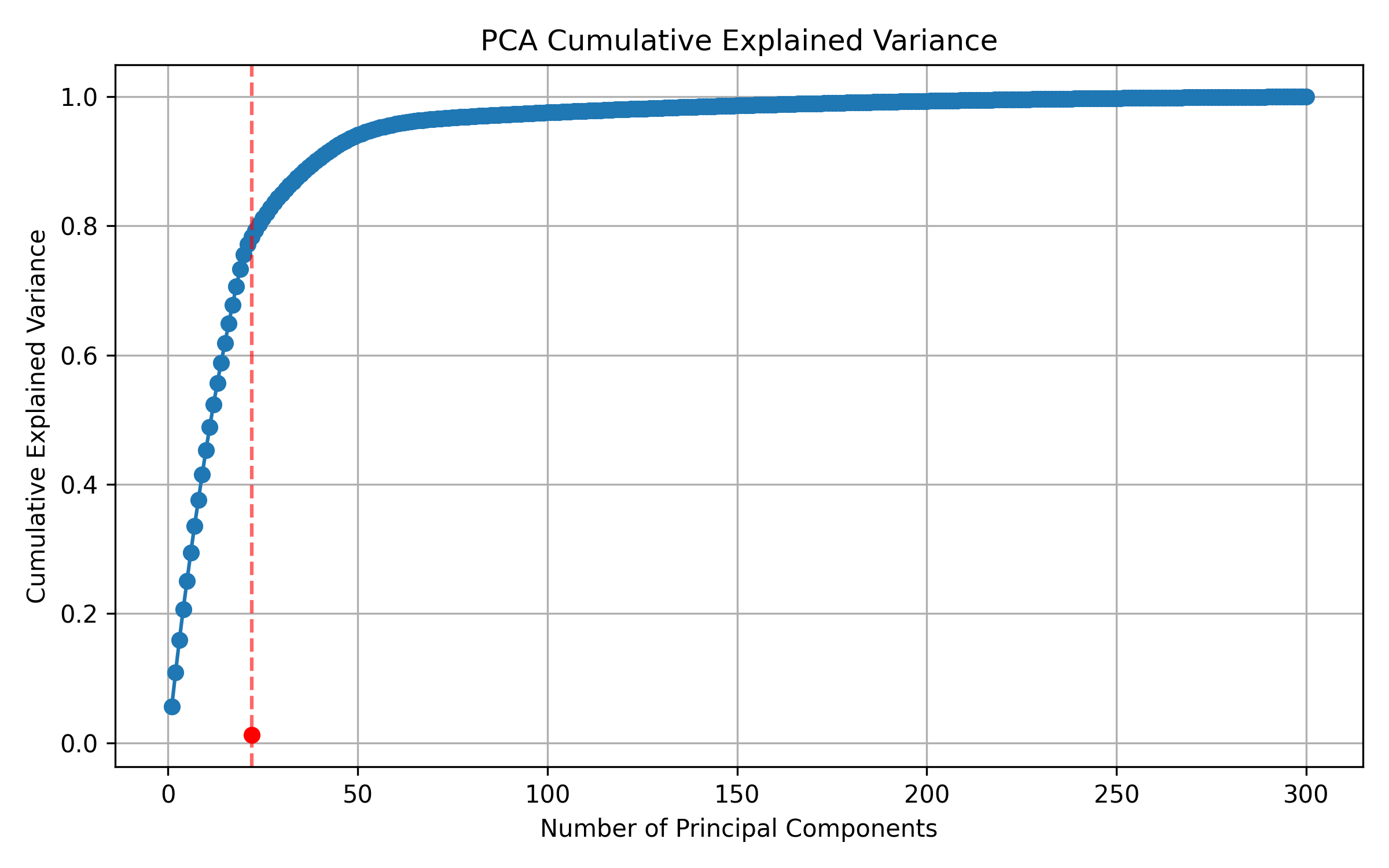}
        \caption{Cumulative Explained Variance}
    \end{subfigure}
    \hfill
    \caption{(a) A standard PCA elbow plot to estimate the intrinsic dimensionality. A noticeable drop occurs around the 22\textsuperscript{nd} principal component, which is highlighted by the red vertical line. (b) the cumulative explained variance across the principal components. The first 22 principal components account for approximately 80\% of the total variance.}
    \label{fig:pca_tep}
\end{figure}
\endgroup

\begingroup
\spacingset{1.2}
 \begin{figure}[htbp!]
    \centering
    \begin{subfigure}[b]{0.3\textwidth}
        \centering
        \includegraphics[width=\textwidth]{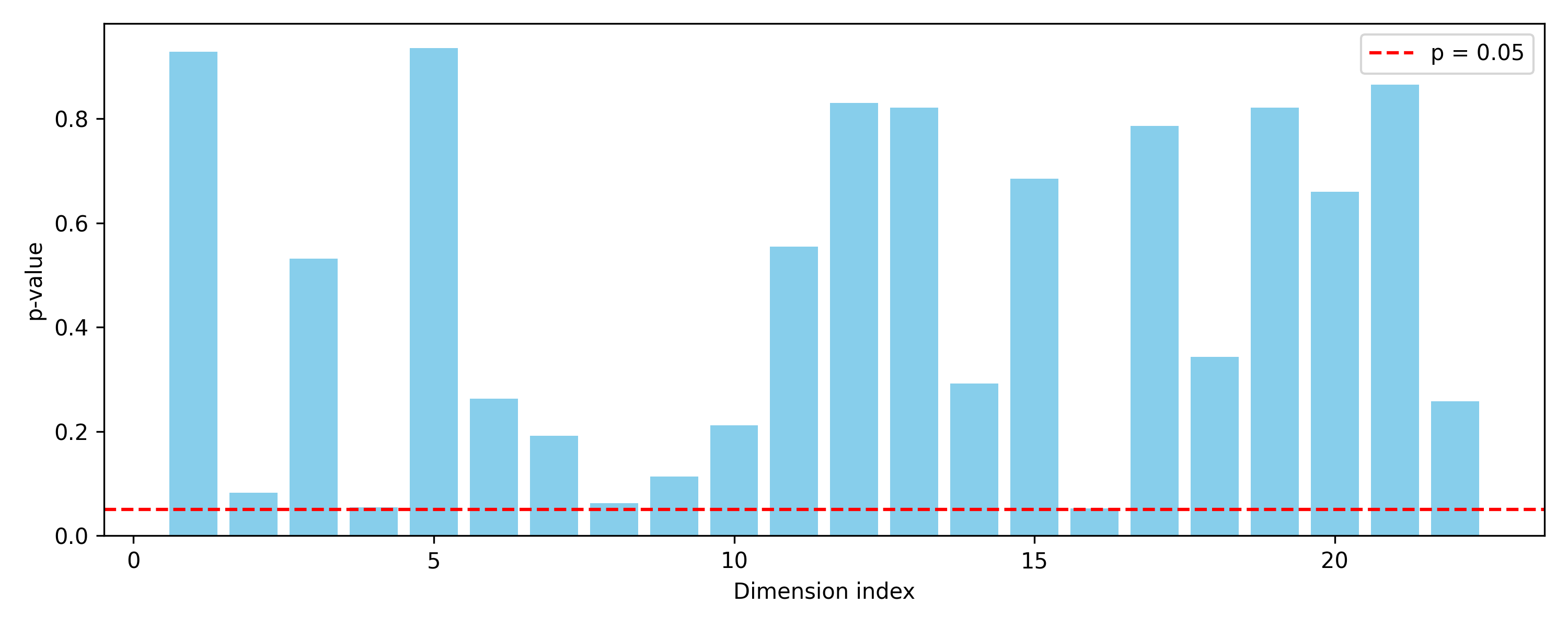}
        \caption{PCA}
    \end{subfigure}
    \hfill
    \begin{subfigure}[b]{0.3\textwidth}
        \centering
        \includegraphics[width=\textwidth]{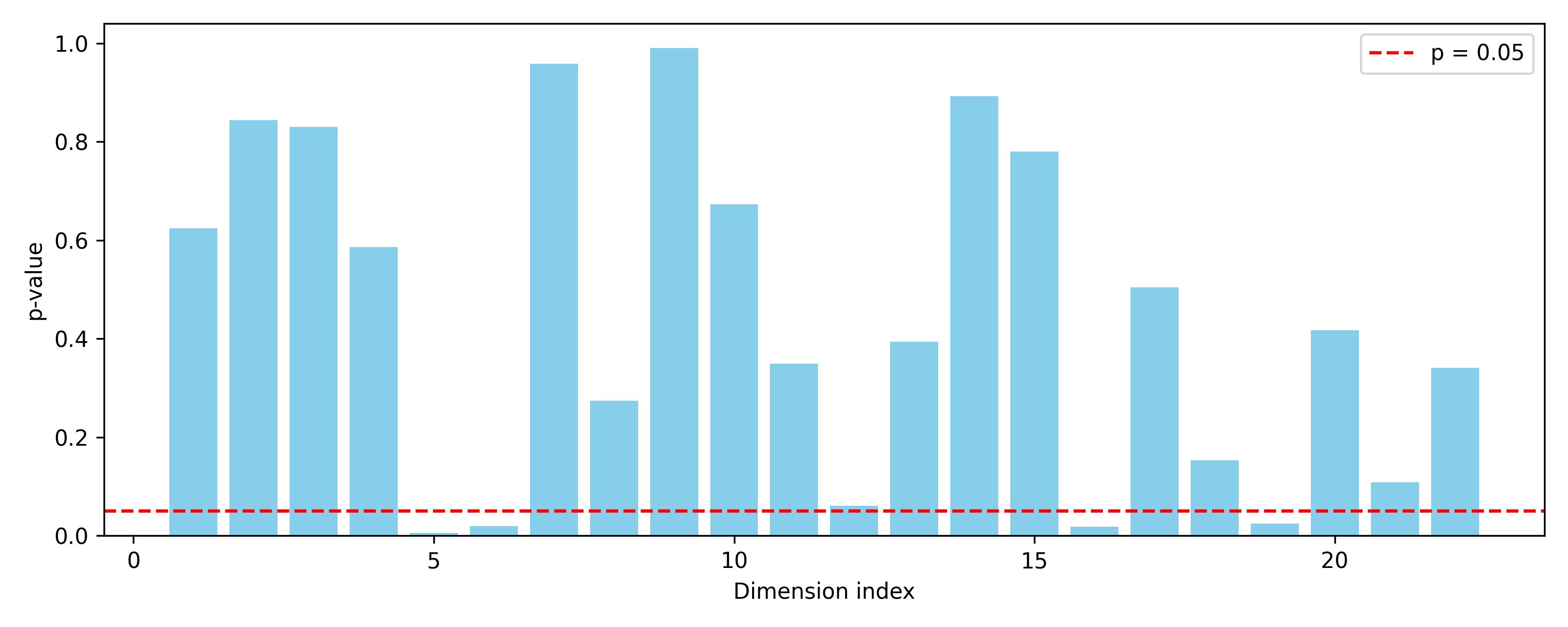}
        \caption{NPE}
    \end{subfigure}
    \hfill
    \begin{subfigure}[b]{0.3\textwidth}
        \centering
        \includegraphics[width=\textwidth]{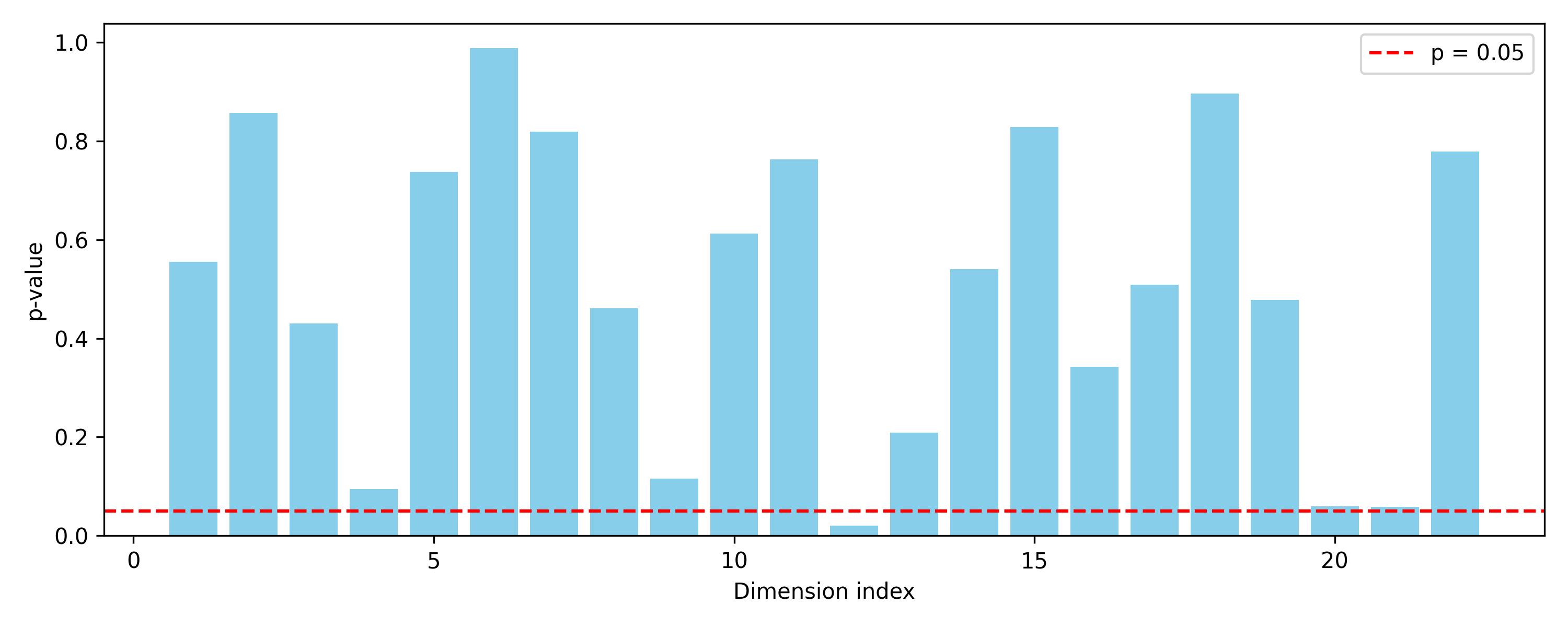}
        \caption{LPP}
    \end{subfigure}
    \hfill
    \caption{P-values from the Shapiro normality test, applied separately to each dimensionality reduction method.  While filtered quantities are relatively normally distributed when PCA is used, they deviate from normality when NPE and LPP are used. This shows the need for a distribution-free control chart method in the lower-dimensional space even if the ambient data are normal.}
    \label{fig:p_values}
\end{figure}
\endgroup
We evaluated the performance of both of the proposed SPC frameworks on the TE simulator faults~3, 4, and~9. Fault~4 affects the cooling-water inlet temperature of the reactor, while Faults~3 and~9 impact the temperature of stream~2 \citep{BATHELT2015309}. Our analysis focuses in particular on Faults~3 and~9, which are regarded as ``undetectable'' in the Chemometrics literature \citep{ma2015fault}. Each fault was introduced into only one of the ten TE simulators, so that the fault affected a single simulator, thereby increasing the difficulty of detection.

Unlike previous work, we simulated the faults at lower amplitudes, ranging from 0.05 to 0.1. Table~\ref{tab:tep_results} presents the ARL values for the proposed frameworks across different faults and amplitudes for the setting $m < D$. Each ARL$_{in}$ value is close to the nominal value 20, validating our practical assumption on the i.i.d. behavior of the forecast residuals. As noted previously, PCA is expected to perform well for linear manifolds. However, since the TE process is governed by various feedback control loops, the process data likely reside on a nonlinear manifold. As a result, the PCA-based monitoring approach failed to detect any of the simulated faults, unlike the other methods that account for a nonlinear structure. Among these, the manifold fitting (MF) approach outperformed both LPP and NPE in detecting faults 3 and 4 when the fault amplitude was set to 0.1, indicating that MF is more effective at identifying large shifts. For smaller shifts, the NPE-based approach achieved the best ARL$_{out}$, followed by the LPP-based approach. 
\begingroup
\spacingset{1.2}
\begin{table}[htbp!]
\centering
\begin{tabular}{|c|c|c|c|c|c|}
\hline
\textbf{Fault} & \textbf{Amplitude} & \textbf{MF} & \textbf{NPE} & \textbf{LPP} & \textbf{PCA} \\
\hline
-- & 0 & 20.95(23.15) & 19.22(18.92) & 19.46(19.39) & 20.41(20.41) \\
\hline
3 & 0.05 & 6.50(7.43) & 3.34(2.56) & 3.44(2.01) & 19.79(19.72) \\
\hline
3 & 0.1 & 2.09(0.72) & 2.20(0.66) & 2.30(0.70) & 19.93(19.46) \\
\hline
4 & 0.05 & 6.46(7.60) & 3.32(1.95) & 3.51(2.48) & 20.13(20.05) \\
\hline
4 & 0.1 & 2.09(0.72) & 2.21(0.67) & 2.32(0.71) & 19.91(19.84) \\
\hline
9 & 0.05 & 12.80(9.80) & 11.22(7.38) & 11.34(7.44) & 20.05(19.77) \\
\hline
9 & 0.1 & 10.67(6.97) & 9.84(5.89) & 9.94(5.92) & 19.80(19.71) \\
\hline
\end{tabular}
\caption{ARL values of the methods for various faults and amplitudes, TE simulations, with SDRL shown in parentheses. Each ARL value is computed from 10,000 simulations, with nominal $\text{ARL}_{\text{in}}=20$. The first $700$ Phase I observations were used to fit or learn the manifold, the next $400$ were used to fit AR(20) models, and the final $100$ were used to set the control charts, following the procedures in Algorithms \ref{alg:mf_spc} and \ref{alg:ml_spc}. For manifold learning (LPP, NPE and PCA), we embedded into $\mathbb{R}^{22}$ and constructed graphs using 15 nearest neighbors. For manifold fitting, $\sigma$ was estimated and the parameters were fixed at $C_0=C_2=22$ and $C_1=14$.}
\label{tab:tep_results}
\end{table}
\endgroup
\subsection{High dimensional performance}

We can only evaluate the performance of the MF approach in high-dimensional setting ($m < D$) given that as previously noted, both LPP and NPE become ill-posed in this case (as do PCA). 
Consequently, for $m=280 < D=300$ we only evaluated the MF approach. Table \ref{tab:tep_results_hd} presents the ARL values of the MF approach in this scenario. The computed ARL$_{in}$ remains close to 20, showing that the MF approach maintains an in-control ARL even in a high dimensional scenario. The fault detection power of the approach has declined compared to table \ref{tab:tep_results}, which used $m=700$, especially for faults 3 and 4 and amplitude 0.05. 
\begingroup
\spacingset{1.2}
\begin{table}[htbp!]
\centering
\begin{tabular}{|c|c|c|}
\hline
\textbf{Fault} & \textbf{Amplitude} & \textbf{MF}\\
\hline
-- & 0 & 21.33(24.85) \\
\hline
3 & 0.05 & 9.89(13.00) \\
\hline
3 & 0.1 & 2.65(2.79) \\
\hline
4 & 0.05 & 9.96(14.24)  \\
\hline
4 & 0.1 & 2.59(1.28) \\
\hline
9 & 0.05 & 13.89(12.06) \\
\hline
9 & 0.1 & 11.18(7.39) \\
\hline
\end{tabular}
\caption{ARL values of the MF approach for various faults and amplitudes in the extreme scenario where $m=280<D=300$, TE simulations, with SDRL shown in parentheses. Each ARL is computed from 10,000 simulations. In each simulation, the first 200 Phase I observations are used to fit a manifold, followed by 60 observations used to fit a univariate AR(p) model, with $p$ selected by the AIC criterion ($p\leq10$). The last 20 Phase I observations are used to set up the UDFM.}
\label{tab:tep_results_hd}
\end{table}
\endgroup

\subsection{The Kolektor surface-defect dataset}

The Kolektor surface-defect dataset, which is images of electrical commutators with surface defects is created by Kolektor Group d.o.o. and used for anomaly detection in a segmentation-based deep-learning framework by \cite{tabernik2020segmentation}. The dataset features images of 50 electrical commutators, with each commutator’s surface divided into eight distinct, non-overlapping sections. High-resolution images (1408 × 512 pixels) were captured for each section, resulting in a total of 399 images. Every commutator contains at least one surface defect, leading to 52 defective surface images and 347 non-defective ones. Example images of both defective and non-defective surfaces are shown in Figure~\ref{fig:KolektorSDD_samples}.

\begingroup
\spacingset{1.2}
\begin{figure}[htbp!]
    \centering
    \begin{subfigure}[b]{0.18\textwidth}
        \centering
        \includegraphics[width=\textwidth]{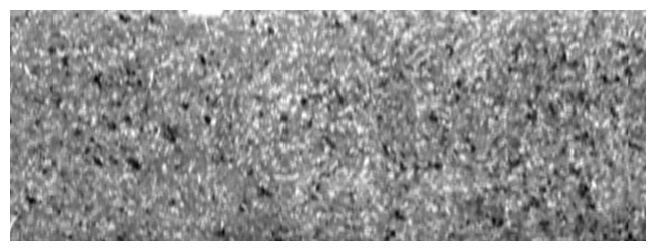}
    \end{subfigure}
    \hspace{0.5mm}
    \begin{subfigure}[b]{0.18\textwidth}
        \centering
        \includegraphics[width=\textwidth]{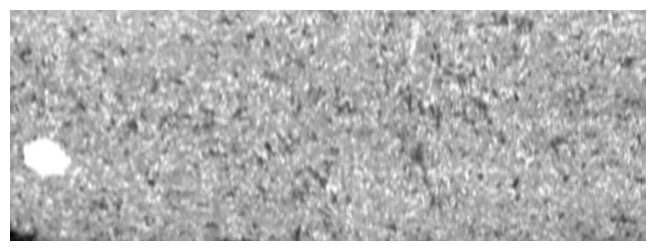}
    \end{subfigure}
    \hspace{0.5mm}
    \begin{subfigure}[b]{0.18\textwidth}
        \centering
        \includegraphics[width=\textwidth]{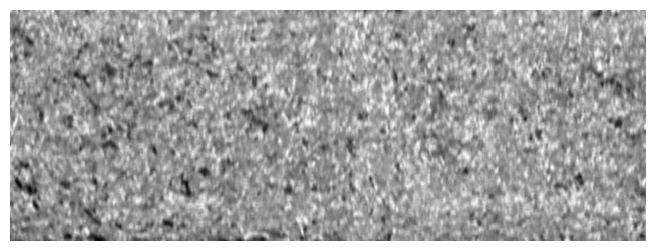}
    \end{subfigure}
    \hspace{0.5mm}
    \begin{subfigure}[b]{0.18\textwidth}
        \centering
        \includegraphics[width=\textwidth]{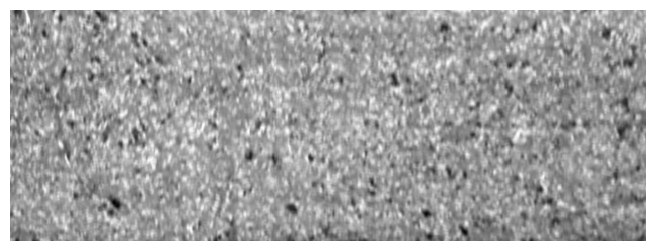}
    \end{subfigure}
    \hspace{0.5mm}
    \begin{subfigure}[b]{0.18\textwidth}
        \centering
        \includegraphics[width=\textwidth]{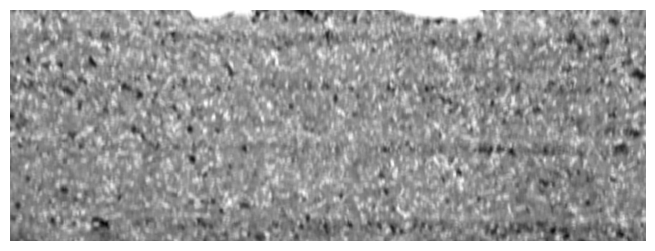}
    \end{subfigure}
    \centering
    \begin{subfigure}[b]{0.18\textwidth}
        \centering
        \includegraphics[width=\textwidth]{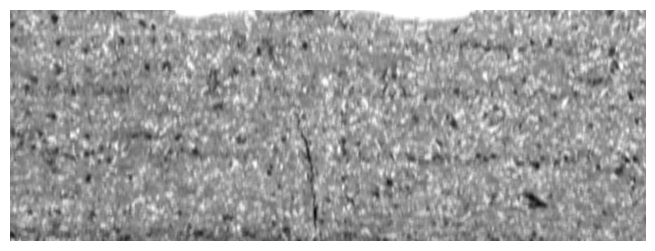}
    \end{subfigure}
    \hspace{0.5mm}
    \begin{subfigure}[b]{0.18\textwidth}
        \centering
        \includegraphics[width=\textwidth]{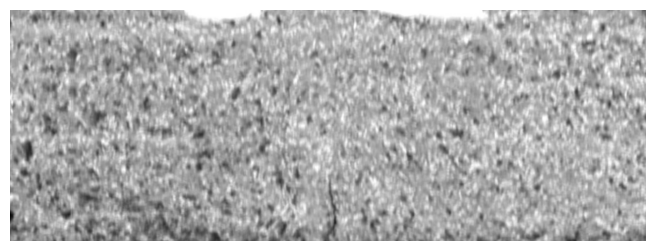}
    \end{subfigure}
    \hspace{0.5mm}
    \begin{subfigure}[b]{0.18\textwidth}
        \centering
        \includegraphics[width=\textwidth]{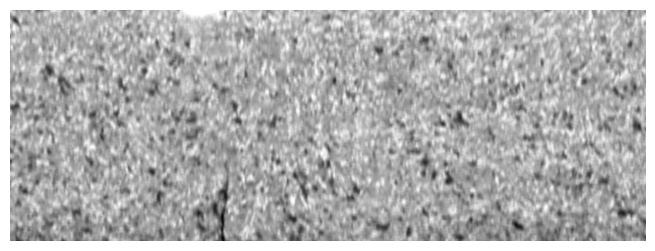}
    \end{subfigure}
    \hspace{0.5mm}
    \begin{subfigure}[b]{0.18\textwidth}
        \centering
        \includegraphics[width=\textwidth]{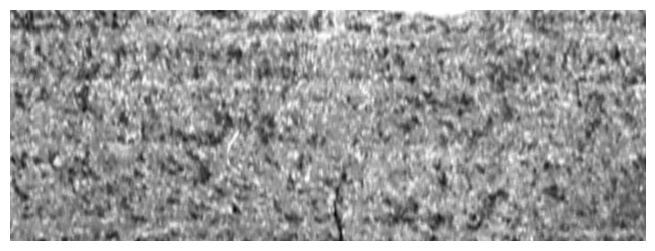}
    \end{subfigure}
    \hspace{0.5mm}
    \begin{subfigure}[b]{0.18\textwidth}
        \centering
        \includegraphics[width=\textwidth]{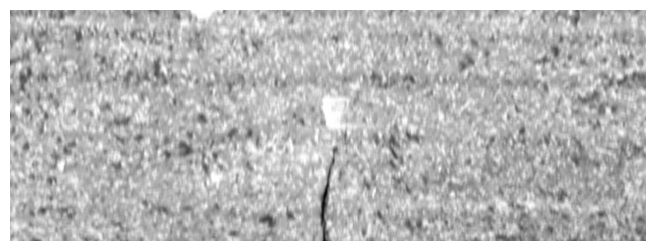}
    \end{subfigure}
    \caption{Example images from the Kolektor surface-defect dataset. The top row displays non-defective surfaces, while the bottom row shows defective ones. While some defects are clearly visible, others are subtle and difficult to see to the eye.}
    \label{fig:KolektorSDD_samples}
\end{figure}
\endgroup

We treated each image as a vector in $\mathbb{R}^{720896}$ and used 330 images of non-defective surfaces to fit a manifold. We set the tuning parameters to $C_0 = C_2 = 1100$ and $C_1 = 700$, and estimated $\sigma$ to be $0.08082$ in our MF method. The deviations of the remaining 17 non-defective images were then estimated to set up the UDFM control chart, after which the defective images were fed to the MF SPC algorithm. In this case, we did not apply any filtering to the estimated distances, as the images were assumed to be i.i.d. We set the nominal $\text{ARL}_{\text{in}}$ to 200, with $w = 5$ and $\lambda = 0.05$. In these settings, our MF SPC framework yielded a signal at $5^{th}$ observation. Figure~\ref{fig:kolektor_cc} displays the estimated distances and the test statistic of the control chart and its control limits.

\begingroup
\spacingset{1.2}
\begin{figure}[htbp!]
    \centering
    \begin{subfigure}[b]{0.47\textwidth}
        \centering
        \includegraphics[height=3cm]{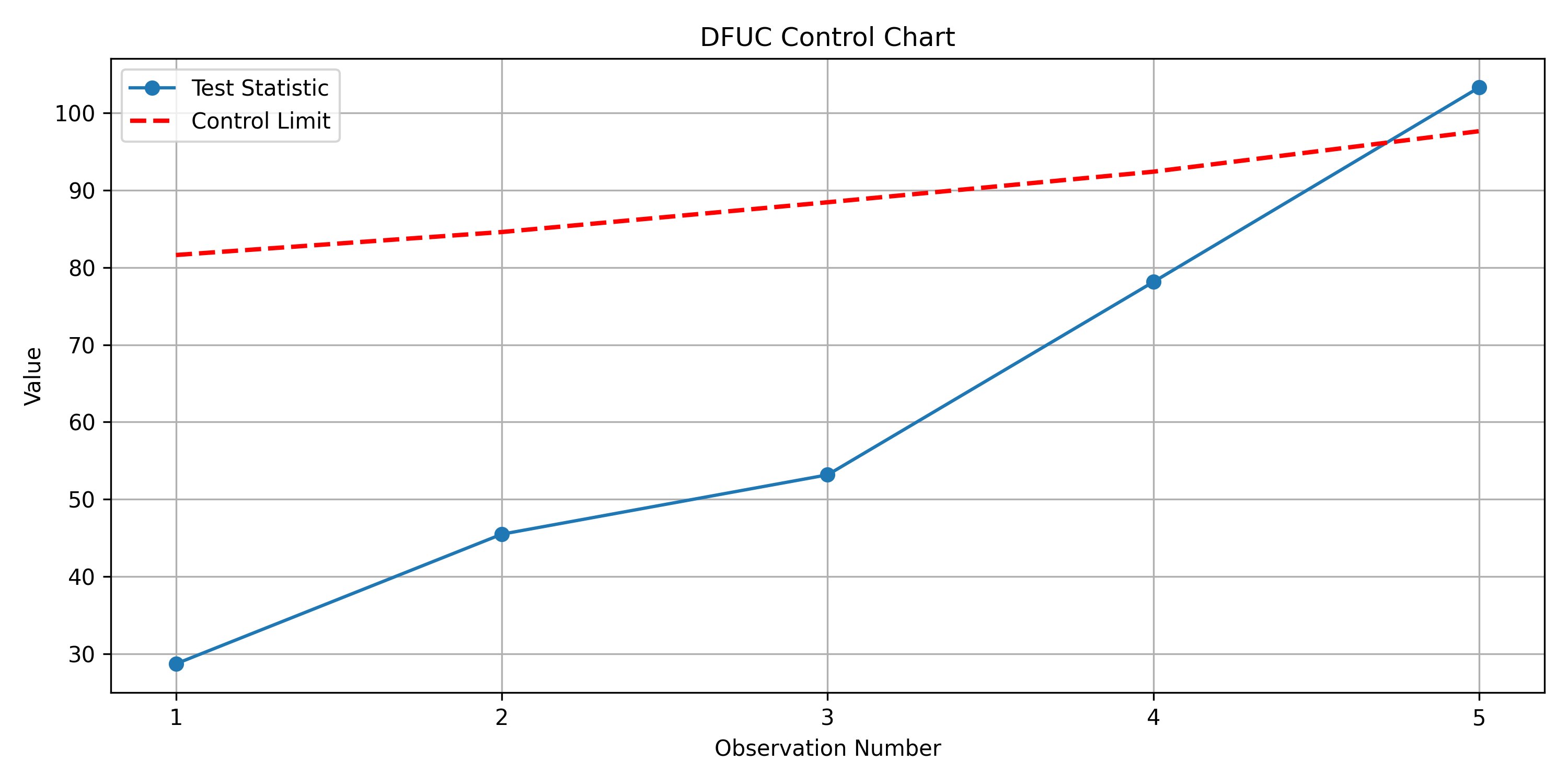}
        \caption{UDFM control chart}
    \end{subfigure}
    \hfill
    \begin{subfigure}[b]{0.47\textwidth}
        \centering
        \includegraphics[height=3cm]{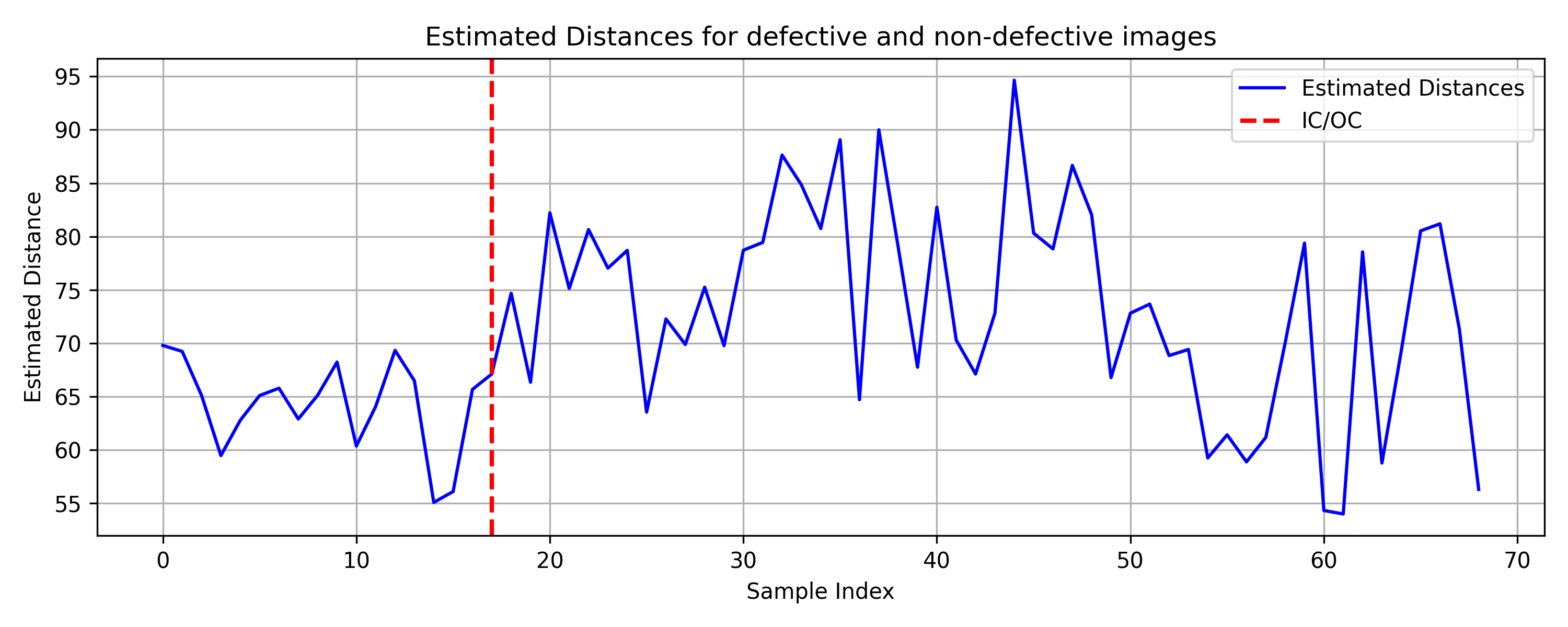}
        \caption{Estimated distances of non-defective and defective surfaces}
    \end{subfigure}
    \caption{ (a) The control chart raises an alarm at the $5^{\text{th}}$ defective surface image when $\text{ARL}_{\text{in}} = 200$. The red line represents the control limit, while the blue line indicates the test statistic. (b) The estimated distances for 17 non-defective and 52 defective surface images. The vertical red line separates the non-defective from the defective samples. The estimated distances for the defective surfaces are noticeably higher than those for non-defective ones.}
\label{fig:kolektor_cc}
\end{figure}
\endgroup
\section{Conclusion}\label{sec-conc}

Two distinct SPC frameworks for high-dimensional dynamic industrial processes, under the assumption that the process data lie on a nonlinear manifold were presented. The first framework leverages a state-of-the-art technique on manifold fitting to estimate a manifold in the ambient space from in-control data, such that deviations from the estimated manifold can be monitored. This remarkable simple idea results in a very effective SPC detection technique. The univariate sequence of deviations is monitored with a novel distribution-free univariate control chart with controllable  ARL$_{\text{in}}$. The second SPC framework adopts a more traditional approach: the high-dimensional observations are embedded into a lower-dimensional space and subsequently monitored with a multivariate control chart, also ensuring a controllable ARL$_{\text{in}}$. The manifold learning methods used have closed-form expressions for the out of sample extension, permitting on-line monitoring. In both methods, nonparametric charts are used given that the data to monitor is not normally distributed even if the original data in ambeint space were normal.

Extensive numerical simulations on a synthetic process on a d-sphere and on a parallel Tennessee Eastman Process demonstrate that the conceptually simpler manifold fitting based SPC framework achieves performance competitive with, and sometimes superior to, the more classical manifold learning based SPC methods. We also demonstrated the practical applicability
of the proposed manifold-fitting SPC framework by successfully detecting surface anomalies
in a real image dataset of electrical commutators.

Further work can be undertaken along the following lines. While both frameworks were developed to detect mean shifts in the ambient space, the proposed manifold-fitting approach has also a potential to detect shape changes in the underlying manifold, which would be analogous to detecting covariance changes in the process distribution. Moreover, considering the projected points onto the manifold rather than the deviations from it may reveal which variables caused the signal.


\section{Disclosure statement}\label{disclosure-statement}

The authors have no conflicts of interest to declare.

\section{Data Availability Statement}\label{data-availability-statement}

Code and data are available at \url{https://github.com/iburaktas/Manifold-SPC}, including examples of scaled-down versions of the experiments presented in this paper and a pipeline for applying the proposed SPC frameworks.

\spacingset{1.1}
\bibliography{bibliography.bib}

@incollection{hotelling1947multivariate,
  author    = {Hotelling, Harold},
  title     = {Multivariate Quality Control, Illustrated by the Air Testing of Sample Bombsights},
  booktitle = {Techniques of Statistical Analysis},
  editor    = {Eisenhart, C. and Hastay, M. W. and Wallis, W. A.},
  publisher = {McGraw-Hill},
  address   = {New York, NY},
  year      = {1947},
  pages     = {111--184}
}

@article{jiang2008theoretical,
author = {Wei Jiang and Kwok-Leung Tsui},
title = {A theoretical framework and efficiency study of multivariate statistical process control charts},
journal = {IIE Transactions},
volume = {40},
number = {7},
pages = {650--663},
year = {2008},
publisher = {Taylor \& Francis},
doi = {10.1080/07408170701745352}
}

@book{jackson1991user,
  author    = {Jackson, J. Edward},
  title     = {A User's Guide to Principal Components},
  publisher = {John Wiley \& Sons},
  address   = {New York},
  year      = {1991},
  series    = {Wiley Series in Probability and Statistics}
}

@article{macgregor1995statistical,
title = {Statistical process control of multivariate processes},
author = {J.F. MacGregor and T. Kourti},
journal = {Control Engineering Practice},
volume = {3},
number = {3},
pages = {403-414},
year = {1995},
issn = {0967-0661},
doi = {https://doi.org/10.1016/0967-0661(95)00014-L}
}

@book{ma2012manifold,
  author = {Ma, Y. and Fu, Y.},
  isbn = {9781439871096},
  lccn = {2011284970},
  publisher = {Taylor \& Francis},
  title = {Manifold Learning Theory and Applications},
  year = 2011
}

@article{tenenbaum2000global,
author = {Joshua B. Tenenbaum  and Vin de Silva  and John C. Langford },
title = {A Global Geometric Framework for Nonlinear Dimensionality Reduction},
journal = {Science},
volume = {290},
number = {5500},
pages = {2319-2323},
year = {2000},
doi = {10.1126/science.290.5500.2319}
}

@article{coifman2006diffusion,
title = {Diffusion maps},
journal = {Applied and Computational Harmonic Analysis},
volume = {21},
number = {1},
pages = {5-30},
year = {2006},
issn = {1063-5203},
doi = {https://doi.org/10.1016/j.acha.2006.04.006},
author = {Ronald R. Coifman and Stéphane Lafon}
}

@article{zhang2004principal,
  title   = {Principal Manifolds and Nonlinear Dimensionality Reduction via Tangent Space Alignment},
  author  = {Zhang, Zhenyue and Zha, Hongyuan},
  journal = {SIAM Journal on Scientific Computing},
  volume  = {26},
  number  = {1},
  pages   = {313--338},
  year    = {2004}
}

@article{roweis2000nonlinear,
  title   = {Nonlinear Dimensionality Reduction by Locally Linear Embedding},
  author  = {Roweis, Sam T. and Saul, Lawrence K.},
  journal = {Science},
  volume  = {290},
  number  = {5500},
  pages   = {2323--2326},
  year    = {2000}
}

@article{belkin2003laplacian,
  title   = {Laplacian Eigenmaps for Dimensionality Reduction and Data Representation},
  author  = {Belkin, Mikhail and Niyogi, Partha},
  journal = {Neural Computation},
  volume  = {15},
  number  = {6},
  pages   = {1373--1396},
  year    = {2003}
}

@inproceedings{lpp,
  author    = {He, Xiaofei and Niyogi, Partha},
  title     = {Locality Preserving Projections},
  booktitle = {Advances in Neural Information Processing Systems 16},
  pages     = {153--160},
  year      = {2003},
  publisher = {MIT Press},
  address   = {Cambridge, MA, USA}
}

@inproceedings{npe,
  author    = {He, Xiaofei and Cai, Deng and Yan, Shuicheng and Zhang, Hong-Jiang},
  title     = {Neighborhood {Preserving Embedding}},
  booktitle = {Proceedings of the Tenth IEEE International Conference on Computer Vision (ICCV'05)},
  year      = {2005},
  volume    = {2},
  pages     = {1208--1213},
  doi       = {10.1109/ICCV.2005.167},
  publisher = {IEEE}
}

@article{li2015,
author = {Li, Nan and Yan, Weiwu and Yang, Yupu},
title = {Spatial-Statistical Local Approach for Improved Manifold-Based Process Monitoring},
journal = {Industrial \& Engineering Chemistry Research},
volume = {54},
number = {34},
pages = {8509-8519},
year = {2015},
doi = {10.1021/acs.iecr.5b00257},
}

@article{zhan2019improved,
title = {Improved process monitoring based on global–local manifold analysis and statistical local approach for industrial process},
journal = {Journal of Process Control},
volume = {75},
pages = {107-119},
year = {2019},
issn = {0959-1524},
doi = {https://doi.org/10.1016/j.jprocont.2018.12.016},
author = {Chengjun Zhan and Shuanghong Li and Yupu Yang}
}

@article{cui2022nonparametric,
author = {Cui, Ping and Wang, Xuhong and Yang, Yupu},
title = {Nonparametric manifold learning approach for improved process monitoring},
journal = {The Canadian Journal of Chemical Engineering},
volume = {100},
number = {1},
pages = {67-89},
year = {2022}
}

@article{ma2015fault,
title = {Fault detection via local and nonlocal embedding},
journal = {Chemical Engineering Research and Design},
volume = {94},
pages = {538-548},
year = {2015},
issn = {0263-8762},
doi = {https://doi.org/10.1016/j.cherd.2014.09.015},
author = {Yuxin Ma and Bing Song and Hongbo Shi and Yawei Yang.},
}

@article{Sober2020,
  author    = {Sober, Barak and Levin, David},
  title     = {Manifold Approximation by Moving Least-Squares Projection (MMLS)},
  journal   = {Constructive Approximation},
  volume    = {52},
  number    = {3},
  pages     = {433--478},
  year      = {2020},
  doi       = {10.1007/s00365-019-09489-8},
  issn      = {1432-0940}
}

@misc{aizenbud2021non,
      title={Non-Parametric Estimation of Manifolds from Noisy Data}, 
      author={Yariv Aizenbud and Barak Sober},
      year={2021},
      eprint={2105.04754},
      archivePrefix={arXiv},
      primaryClass={math.ST},
      url={https://arxiv.org/abs/2105.04754}, 
}

@article{fefferman2023fitting,
  title={Fitting a manifold of large reach to noisy data},
  author = {Fefferman, Charles and Ivanov, Sergei and Lassas, Matti and Narayanan, Hariharan},
  journal = {Journal of Topology and Analysis},
  volume = {17},
  number = {02},
  pages = {315-396},
  year = {2025},
}

@misc{yao2023manifold,
      title={Manifold Fitting}, 
      author={Zhigang Yao and Jiaji Su and Bingjie Li and Shing-Tung Yau},
      year={2023},
      eprint={2304.07680},
      archivePrefix={arXiv},
      primaryClass={math.ST},
      url={https://arxiv.org/abs/2304.07680}, 
}

@article{yao2024manifold,
  title={Manifold fitting with CycleGAN},
  author={Yao, Zhigang and Su, Jiaji and Yau, Shing-Tung},
  journal={Proceedings of the National Academy of Sciences},
  volume={121},
  number={5},
  pages={e2311436121},
  year={2024},
  publisher={National Academy of Sciences}
}

@article{lu1999control,
  title={Control charts for monitoring the mean and variance of autocorrelated processes},
  author={Lu, Chao-Wen and Reynolds Jr, Marion R},
  journal={Journal of Quality Technology},
  volume={31},
  number={3},
  pages={259--274},
  year={1999},
  publisher={Taylor \& Francis}
}

@article{xie2024general,
  title={A general framework for robust monitoring of multivariate correlated processes},
  author={Xie, Xiulin and Qiu, Peihua},
  journal={Technometrics},
  volume={66},
  number={1},
  pages={40--54},
  year={2024},
  publisher={Taylor \& Francis}
}

@article{qiu2022transparent,
  title={Transparent sequential learning for statistical process control of serially correlated data},
  author={Qiu, Peihua and Xie, Xiulin},
  journal={Technometrics},
  volume={64},
  number={4},
  pages={487--501},
  year={2022},
  publisher={Taylor \& Francis}
}

@article{qiu2020new,
  title={A new process control chart for monitoring short-range serially correlated data},
  author={Qiu, Peihua and Li, Wendong and Li, Jun},
  journal={Technometrics},
  volume={62},
  number={1},
  pages={71--83},
  year={2020},
  publisher={Taylor \& Francis}
}

@article{montgomery1991some,
  title={Some statistical process control methods for autocorrelated data},
  author={Montgomery, Douglas C and Mastrangelo, Christina M},
  journal={Journal of Quality Technology},
  volume={23},
  number={3},
  pages={179--193},
  year={1991},
  publisher={Taylor \& Francis}
}

@article{alwan1988time,
 author = {Layth C. Alwan and Harry V. Roberts},
 journal = {Journal of Business \& Economic Statistics},
 number = {1},
 pages = {87--95},
 title = {Time-Series Modeling for Statistical Process Control},
 volume = {6},
 year = {1988}
}

@book{lee2012introduction,
  author    = {Lee, John M.},
  title     = {Introduction to Smooth Manifolds},
  edition   = {2},
  year      = {2012},
  publisher = {Springer},
  address   = {New York, NY},
  series    = {Graduate Texts in Mathematics},
  volume    = {218},
  pages     = {XVI+708},
  isbn      = {978-1-4419-9981-8},
  doi       = {10.1007/978-1-4419-9982-5},
  issn      = {0072-5285},
  eissn     = {2197-5612}
}

@article{federer1959curvature,
  title={Curvature measures},
  author={Federer, Herbert},
  journal={Transactions of the American Mathematical Society},
  volume={93},
  number={3},
  pages={418--491},
  year={1959},
  publisher={JSTOR}
}

@article{chen2016distribution,
  title={A distribution-free multivariate control chart},
  author={Chen, Nan and Zi, Xuemin and Zou, Changliang},
  journal={Technometrics},
  volume={58},
  number={4},
  pages={448--459},
  year={2016},
  publisher={Taylor \& Francis}
}

@inproceedings{belkin2001laplacian,
author = {Belkin, Mikhail and Niyogi, Partha},
title = {Laplacian eigenmaps and spectral techniques for embedding and clustering},
year = {2001},
publisher = {MIT Press},
address = {Cambridge, MA, USA},
booktitle = {Proceedings of the 15th International Conference on Neural Information Processing Systems: Natural and Synthetic},
pages = {585–591},
numpages = {7},
location = {Vancouver, British Columbia, Canada},
series = {NIPS'01}
}

@unpublished{perrault2011metric,
  author = {Perrault-Joncas, Dominique and Meil\u{a}, Marina},
  title  = {Metric Learning on Manifolds},
  year   = {2011},
  institution = {Department of Statistics, University of Washington},
  url    = {https://www.stat.washington.edu/mmp/Papers/RMetric.pdf}
}

@article{belkin2008towards,
  title={Towards a theoretical foundation for Laplacian-based manifold methods},
  author={Belkin, Mikhail and Niyogi, Partha},
  journal={Journal of Computer and System Sciences},
  volume={74},
  number={8},
  pages={1289--1308},
  year={2008},
  publisher={Elsevier}
}

@article{papaioannou2022time,
  title={Time-series forecasting using manifold learning, radial basis function interpolation, and geometric harmonics},
  author={Papaioannou, Panagiotis G and Talmon, Ronen and Kevrekidis, Ioannis G and Siettos, Constantinos},
  journal={Chaos: An Interdisciplinary Journal of Nonlinear Science},
  volume={32},
  number={8},
  year={2022},
  publisher={AIP Publishing}
}

@article{zhao2021intrinsic,
  title={An intrinsic geometrical approach for statistical process control of surface and manifold data},
  author={Zhao, Xueqi and del Castillo, Enrique},
  journal={Technometrics},
  volume={63},
  number={3},
  pages={295--312},
  year={2021},
  publisher={Taylor \& Francis}
}

@article{DOWNS1993245,
title = {A plant-wide industrial process control problem},
journal = {Computers \& Chemical Engineering},
volume = {17},
number = {3},
pages = {245-255},
year = {1993},
author = {J.J. Downs and E.F. Vogel},
note = {Industrial challenge problems in process control},
issn = {0098-1354},
doi = {https://doi.org/10.1016/0098-1354(93)80018-I},
}

@article{BATHELT2015309,
title = {Revision of the Tennessee Eastman Process Model},
journal = {IFAC-PapersOnLine},
volume = {48},
number = {8},
pages = {309-314},
year = {2015},
note = {9th IFAC Symposium on Advanced Control of Chemical Processes ADCHEM 2015},
issn = {2405-8963},
doi = {https://doi.org/10.1016/j.ifacol.2015.08.199},
author = {Andreas Bathelt and N. Lawrence Ricker and Mohieddine Jelali},
}

@article{ricker1996decentralized,
  title={Decentralized control of the Tennessee Eastman challenge process},
  author={Ricker, N Lawrence},
  journal={Journal of Process Control},
  volume={6},
  number={4},
  pages={205--221},
  year={1996},
  publisher={Elsevier}
}

@article{larsson2001self,
author = {Larsson, Truls and Hestetun, Kristin and Hovland, Espen and Skogestad, Sigurd},
title = {Self-Optimizing Control of a Large-Scale Plant: The Tennessee Eastman Process},
journal = {Industrial \& Engineering Chemistry Research},
volume = {40},
number = {22},
pages = {4889-4901},
year = {2001}
}

@Article{bac2021scikit,
AUTHOR = {Bac, Jonathan and Mirkes, Evgeny M. and Gorban, Alexander N. and Tyukin, Ivan and Zinovyev, Andrei},
TITLE = {Scikit-Dimension: A Python Package for Intrinsic Dimension Estimation},
JOURNAL = {Entropy},
VOLUME = {23},
YEAR = {2021},
NUMBER = {10}
}

@article{tabernik2020segmentation,
  title={Segmentation-based deep-learning approach for surface-defect detection},
  author={Tabernik, Domen and {\v{S}}ela, Samo and Skvar{\v{c}}, Jure and Sko{\v{c}}aj, Danijel},
  journal={Journal of Intelligent Manufacturing},
  volume={31},
  number={3},
  pages={759--776},
  year={2020},
  publisher={Springer}
}

@article{lieutier2024manifolds,
  title={Manifolds of positive reach, differentiability, tangent variation, and attaining the reach},
  author={Lieutier, Andr{\'e} and Wintraecken, Mathijs},
  journal={arXiv preprint arXiv:2412.04906},
  year={2024}
}
\newpage
\spacingset{1.6}

\phantomsection\label{supplementary-material}
\bigskip

\renewcommand{\thesection}{\Alph{section}}
\setcounter{section}{0}
{\large\bf SUPPLEMENTARY MATERIALS}

\section{Proof that the mean of deviations increases} \label{AppDeviationProof}
Recall our problem setup,
\begin{align*}
    Y_t =
\begin{cases}
X_t + \mathcal{E}_t, & \text{for }  t = -m + 1, \dots, \tau -1 \\
X_t + \Delta + \mathcal{E}_t, & \text{for } t = \tau,\tau +1, \dots
\end{cases}
\end{align*}
where $\mathcal{E}_t \overset{\text{i.i.d.}}{\sim} \mathcal{N}(0,\sigma^2I_D)$ and $X_t$ is supported on $\mathcal{M}$

Now, suppose $\mathcal{M}$ is $d$-dimensional manifold (closed, boundaryless) embedded in $\mathbb{R}^D$ with $\operatorname{rch}(\mathcal{M})=\infty$, then $\mathcal{M}$ is a $d$-dimensional affine subspace (see the proof of Theorem 4 in \cite{lieutier2024manifolds}). Without loss of generality, suppose $\mathcal{M}$ is a $d$-dimensional linear subspace. Let $P$ be the orthogonal projection operator on $\mathcal{M}$ and $Q = I_D - P$. Then the projection operator $\pi:\mathbb{R}^D\to \mathcal{M}$ satisfies
$$\pi(Y_t)=PY_t \quad \text{and} \quad Y_t-\pi(Y_t)=QY_t$$
Hence, for $t<\tau$ we have,
$$\|Y_t-\pi(Y_t)\|_2 =\|QY_t\|_2= \|QX_t+Q\mathcal{E}_t\|_2=\|Q\mathcal{E}_t\|_2$$
and for $t\geq\tau$ we have,
$$\|Y_t-\pi(Y_t)\|_2 =\|QY_t\|_2= \|QX_t+Q\Delta +Q\mathcal{E}_t\|_2=\|Q\Delta+Q\mathcal{E}_t\|_2$$
Note that $Q\mathcal{E}_t \sim \mathcal{N}(0,\sigma^2 I_{D-d})$ due to the isotropy of $\mathcal{E}_t$. Hence, $\|Y_t-\pi(Y_t)\|_2^2/\sigma^2 = \|Q\mathcal{E}_t\|_2^2/\sigma^2 \sim \chi^2_{D-d}$ for $t<\tau$ and $\|Y_t-\pi(Y_t)\|_2^2/\sigma^2 = \|Q\Delta + Q\mathcal{E}_t\|_2^2/\sigma^2 \sim \chi^2_{D-d}(\theta)$ with noncentrality parameter $\theta = \|Q\Delta\|_2^2/\sigma^2$ for $t\geq\tau$. 

Consequently, if $\Delta$ is not entirely tangent to $\mathcal{M}$, i.e., $\|Q\Delta\|_2^2 >0$, 
$$\mathbb{E}\left[\|\pi(Y_t) - Y_t\|_2^2 \mid t \geq \tau \right] =\sigma^2 (D-d) +\|Q\Delta\|_2^2  >\sigma^2 (D-d)= \mathbb{E}\left[\|\pi(Y_t) - Y_t\|_2^2 \mid t < \tau \right]$$

\section{A stochastic process on a sphere}
\label{AppStcProcess}
Let $P:\mathbb{R}^D \to \mathbb{R}^{D}$ denote the projection that retains only the first $d{+}1$ coordinates,
$$P(y) = (y_1,\dots,y_{d+1},0,\dots,0),$$
and let
$$\ker(P)=\{\,y\in\mathbb{R}^D : P(y)=0\,\}=\{(0,\dots,0,y_{d+2},\dots,y_D)\}.$$

Let $\mathcal{M}$ be a $d$-dimensional unit sphere $\mathcal{S}^d$ embedded in $\mathbb{R}^{D}$ for $d<D$, defined as $\mathcal{S}^d = \{ x \in \mathbb{R}^D : x_{d+2} = \cdots = x_D = 0, \; \sum_{i=1}^{d+1} x_i^2 = 1 \}$. The latent process $\{X_t\}$ evolves on $\mathcal{S}^d$ according to
$$X_t = \pi(X_{t-1} + E_{t}), \quad \text{for } t=-m+1,-m+2,\dots,$$
where $E_t \overset{\text{i.i.d.}}{\sim} \mathcal{N}(0, \sigma_x^2 I_{D})$, 
$X_{-m}\sim \text{Unif}(\mathcal{S}^d)$, 
and $\pi:\mathbb{R}^D \setminus \ker(P)\to \mathcal{S}^d$ is the projection operator defined by
$$\pi(y) = \frac{P(y)}{\|P(y)\|_2}.$$

Note that the distributions of $E_{-m+1}$ and $X_{-m}$ are rotation invariant. 
Specifically, for any $R \in \mathbb{R}^{D\times D}$ such that
$$R = \begin{bmatrix} 
M & 0 \\ 
0 & I_{D-d-1}
\end{bmatrix}, 
\qquad M \in SO(d{+}1),$$
$R E_{-m+1}\;\stackrel{d}{=}\;E_{-m+1}$ and  $R X_{-m}\;\stackrel{d}{=}\;X_{-m}$.

Hence, for any such $R$, we have
$$\pi(Ry) =\frac{P(Ry)}{\|P(Ry)\|_2} = \frac{P(Ry)}{\|P(y)\|_2} = \frac{RP(y)}{\|P(y)\|_2} = R\,\pi(y).$$

Consequently, for any measurable $A\subset\mathcal S^{d}$,
\begin{align*}
\mathbb{P}\!\left(X_{-m+1}\in A\right)
 &= \mathbb{P}\!\left(\pi(X_{-m}+E_{-m+1})\in A\right) \\
 &= \mathbb{P}\!\left(R\,\pi(X_{-m}+E_{-m+1})\in RA\right) \\
 &= \mathbb{P}\!\left(\pi(RX_{-m}+RE_{-m+1})\in RA\right) \\
 &= \mathbb{P}\!\left(\pi(X_{-m}+E_{-m+1})\in RA\right)\\
&= \mathbb{P}\!\left(X_{-m+1}\in RA\right).
\end{align*}

Since this holds for all $R$ as above, the law of $X_{-m+1}$ is the unique rotation-invariant probability measure on $\mathcal S^d$, i.e. $X_{-m+1}\sim\text{Unif}(\mathcal S^d)$.
Because one step preserves uniformity and the transition kernel is time-homogeneous, starting from 
$X_{-m}\sim\mathrm{Unif}(\mathcal S^d)$ implies $X_t\sim\mathrm{Unif}(\mathcal S^d)$ for every $t$. Hence, $\{X_t\}$ is strictly stationary.

Also, recall that the observed process $\{Y_t\}$ evolves according to
\begin{align*}
    Y_t = X_t + \mathcal{E}_t, \quad \text{for} \quad t= -m+1,-m+2,\dots,
\end{align*}
where \( \mathcal{E}_t \overset{\text{i.i.d.}}{\sim} \mathcal{N}(0, \sigma^2 I_D) \). Then for any R defined as above, $RY_t\;\stackrel{d}{=}\;Y_t$.

Now fix $x\in \mathcal{S}^d$, and choose $R$ of the above form such that $Rx=e_1$, where $e_1$ is a unit vector. Then, 
\begin{align*}
    \bigg(\|Y_t-\pi(Y_t)\| \mid X_{t}=x \bigg) & \;\stackrel{d}{=}\; \bigg(\|RY_t-\pi(RY_t)\| \mid X_{t}=x \bigg) \\ 
    & \;\stackrel{d}{=}\; \bigg(\|R{X_{t}}+R\mathcal{E}_t- \pi(R{X_{t}}+R\mathcal{E}_t)\| \mid  X_{t}=x \bigg) \\
    & \;\stackrel{d}{=}\; \bigg(\|e_1+R\mathcal{E}_t- \pi(e_1+R\mathcal{E}_t)\| \mid  X_{t}=x \bigg) \\
\end{align*}
Hence, the distribution of $\|Y_t-\pi(Y_t)\|$ depends only on $\mathcal{E}_t$. It follows that $\{\|Y_t-\pi(Y_t)\|\}$ are also i.i.d.

\section{Univariate Deviation From Manifold (UDFM) control chart}\label{AppUDFmoments}
Here, we present the moments of $Z_j$ under $H_0$, which are used in the UDFM.

\subsection[E(Zj | H0) = mu_N]{\(\mathbb{E}[Z_j \mid H_0] = \mu_N\)}

\begin{itemize}
    \item If \( N = 2p \) (even), then the ranks exceeding \( p \) contribute:
    \[
    \mathbb{E}[Z_j \mid H_0] = \frac{1}{N^2} \sum_{r=p+1}^{N} \left(r - \frac{N+1}{2} \right) = \frac{1}{N^2} \sum_{s=1}^{p} \left(s - \frac{1}{2} \right) = \frac{p^2}{2N^2} = \frac{1}{8}.
    \]

    \item If \( N = 2p + 1 \) (odd), then:
    \[
    \mathbb{E}[Z_j \mid H_0] = \frac{1}{N^2} \sum_{r=p+2}^{N} \left(r - (p + 1) \right) = \frac{1}{N^2} \sum_{s=1}^{p} s = \frac{p(p+1)}{2N^2} = \frac{N^2 - 1}{8N^2}.
    \]
\end{itemize}
Hence, $\lim_{N\to \infty}\mu_N=1/8$.

\subsection[Var(Zj | H0)]{\(\operatorname{Var}(Z_j \mid H_0)=\sigma^2_N\)}

\[
\mathbb{E}[Z_j^2 \mid H_0] = \frac{1}{N^2} \sum_{r=1}^{N} \left(\max\left(0, r - \frac{N+1}{2}\right)\right)^2 = \frac{N^2 - 1}{24N^2}
\]

\begin{itemize}
    \item If \( N = 2p \) (even):
    \[
    \operatorname{Var}(Z_j\mid H_0) = \frac{N^2 - 1}{24N^2} - \left(\frac{1}{8}\right)^2 = \frac{5N^2 - 8}{192N^2}
    \]

    \item If \( N = 2p+1 \) (odd):
    \[
    \operatorname{Var}(Z_j\mid H_0) = \frac{N^2 - 1}{24N^2} - \left(\frac{N^2 - 1}{8N^2}\right)^2 = \frac{(N^2 - 1)(5N^2 + 3)}{192N^4}
    \]
\end{itemize}
Hence, $\lim_{N\to \infty}\sigma^2_N=\sigma^2_Z=5/192$.

\subsection[Cov(Zj, Zi | H0) for j != i]{\(\operatorname{Cov}(Z_j, Z_i \mid H_0)\) for \(j \neq i\)}

Using the fact that:
\[
\operatorname{Var}\left(\sum_{j=1}^{N} Z_j\mid H_0\right) = 0 = N \operatorname{Var}(Z_1\mid H_0) + N(N-1) \operatorname{Cov}(Z_1, Z_2\mid H_0)
\]
we get
\[
\operatorname{Cov}(Z_j, Z_i\mid H_0) = -\frac{\operatorname{Var}(Z_1\mid H_0)}{N-1}=-\frac{\sigma_N^2}{N-1}
\]

\subsection[Var(sum_{j=1}^{n} (1-lambda)^{n-j} Zj | H0) for lambda in (0,1)]{$\operatorname{Var}\big(\sum_{j=1}^{n}(1-\lambda)^{\,n-j} Z_j \mid H_0\big)$ for $\lambda \in (0,1)$}

Let $a_j = (1 - \lambda)^{n - j}$. We define the following sums:
$$A = \sum_{j=1}^n a_j^2 = \sum_{j=1}^n (1 - \lambda)^{2(n - j)} = \sum_{k = 0}^{n - 1} (1 - \lambda)^{2k} = \frac{1 - (1 - \lambda)^{2n}}{\lambda(2 - \lambda)},$$
$$S = \sum_{j=1}^n a_j = \sum_{k = 0}^{n - 1} (1 - \lambda)^k = \frac{1 - (1 - \lambda)^n}{\lambda}.$$
Hence, 
$$
\sum_{1 \leq i < j \leq n} a_i a_j = \frac{1}{2} \left( S^2 - A \right).
$$

Now, 
\begin{align*}
    \operatorname{Var}\bigg(\sum_{j=1}^{n}(1-\lambda)^{n-j} Z_j\mid H_0\bigg) & = \operatorname{Var}\bigg(\sum_{j=1}^{n}a_j Z_j\mid H_0\bigg) \\
    & = \sum_{j=1}^n a_j^2 \operatorname{Var}(Z_j\mid H_0) + 2 \sum_{1 \leq i < j \leq n} a_i a_j \operatorname{Cov}(Z_i, Z_j\mid H_0) \\
    & = \sigma_N^2\sum_{j=1}^n a_j^2  - \frac{2\sigma_N^2}{N - 1}\sum_{1 \leq i < j \leq n} a_i a_j \\
    & =  \sigma_N^2A - \frac{\sigma_N^2}{N - 1} \left( S^2 - A \right) \\
    & = \sigma^2_N \left(1 + \frac{1}{N - 1} \right) A - \frac{\sigma^2_N}{N - 1} S^2 \\
    & =
\sigma^2_N \left(1 + \frac{1}{N - 1} \right)
\sum_{j=1}^n (1 - \lambda)^{2(n - j)}
- \frac{\sigma^2_N}{N - 1}
\left( \sum_{j=1}^n (1 - \lambda)^{n - j} \right)^2
\end{align*}

\section{Proof of Theorem 1}\label{AppTheorem1}
\begin{theorem}
Let $c_{m,n}(\alpha)$ be the $\alpha$ upper quantile of the distribution of $T^*_{m,n}$ given $S_{m,n}$. Suppose $m$ and $n$ are large enough that the exact $\alpha$ upper quantile is well defined. Then,
\begin{enumerate}[label=(\alph*)]
    \item Under $H_0$, $\mathbb{P}\bigg(T_{m,n} \geq c_{m,n}(\alpha) \bigg) = \alpha$.
    \item Under $H_1$, $\mathbb{P}\bigg(T_{m,n} \geq c_{m,n} (\alpha)\bigg) \to 1$ if $m,n \to \infty$ and $\frac{n}{N}\to \gamma \in (0,1)$
\end{enumerate}
\end{theorem}
\begin{proof} 
\begin{enumerate}[label=(\alph*)]
    \item Under $H_0$ and the given conditions,
    $$\mathbb{P}\bigg(T^*_{m,n} \geq c_{m,n}(\alpha) \bigg) = \mathbb{E}\bigg[\mathbb{P}\bigg(T^*_{m,n} \geq c_{m,n}(\alpha) \mid S_{m,n} \bigg)\bigg]= \alpha$$
    Note that $T_{m,n}$ has the same marginal distribution as $T^{*}_{m,n}$ under $H_0$, then,
    $$\mathbb{P}\bigg(T_{m,n} \geq c_{m,n}(\alpha) \bigg) = \alpha$$
    \item First, we will show that $c_{m,n}(\alpha)$ is upper bounded with a constant independent of $m,n$ under $H_0$. Note that $\mathbb{E}[T^*_{m,n}|S_{m,n}]=0$ and $\operatorname{Var}(T^*_{m,n}|S_{m,n})=1$ under $H_0$. Then, by Cantelli's inequality
        $$\mathbb{P}( T^*_{m,n} \geq c \mid S_{m,n}) \leq \frac{1}{1+c^2}$$
    for $c>0$. Hence, for $c = \sqrt{\frac{1-\alpha}{\alpha}}$, the inequality becomes
    \begin{equation}\label{eq:critical_value_bound}
    \mathbb{P}(T^*_{m,n} \geq c \mid  S_{m,n}) \leq \alpha \implies c_{m,n}(\alpha) \leq \sqrt{\frac{1-\alpha}{\alpha}},
    \end{equation}
    which is independent from $m$ and $n$.
    
    Now, let $\theta_N=E[Z_j | H_1]$. Note that because $H(x)=G(x-\delta)$, under $H_1$, a random observation that comes from the shifted distribution $H$ tends to have a larger rank than one that comes from $G$. Consequently, $\eta_N \to \eta>0$, where $\eta_N=\theta_N-\mu_N$. Then, under $H_1$, if $m,n \to \infty$ and $\frac{n}{N}\to \gamma \in (0,1)$,
\begin{align}\label{eq:asymptotic_mean}
    \mathbb{E}\big[T_{m,n} \mid H_1\big] 
    = \frac{n\eta_N}{\sigma_N \sqrt{n\left(1 - \frac{n - 1}{N - 1}\right)}} = \frac{\sqrt{n} \, \eta_N}{\sigma_N \sqrt{1 - \frac{n - 1}{N - 1}}} \to C\sqrt{n}
\end{align}

    where $C=\eta\ /(\sigma_Z\sqrt{(1-\gamma)})$ and $\sigma_Z = \lim_{N\to \infty}\sigma_N>0$. 
    
    Also,
   $$\operatorname{Var}\!\left(\sum_{j=1}^n Z_j \,|\, H_1\right)
    \leq n \operatorname{Var}(Z_1 \mid H_1) \le \frac{n}{16}$$
    where the first inequality comes from sampling without replacement and the second inequality comes from the fact that $Z_1$ is bounded in $[0,1/2]$. Then,
\begin{equation}\label{eq:variance_bound}
    \operatorname{Var}(T_{m,n} \mid H_1) 
    \leq \frac{1}{16\sigma_N^2\left(1 - \frac{n - 1}{N-1}\right)} \leq c
\end{equation}
   for some $c\in \mathbb{R}$ for sufficiently large $n$ and $m$.
   
   Now, consider
\begin{align}\label{eq:type2_bound}
    \mathbb{P}\bigg(T_{m,n} \leq c_{m,n}(\alpha) \mid H_1 \bigg) 
    &= \mathbb{P}\bigg(-T_{m,n} + \mathbb{E}[T_{m,n} \mid H_1] \geq -c_{m,n}(\alpha) + \mathbb{E}[T_{m,n} \mid H_1] \mid H_1 \bigg) \notag \\
    &\leq \mathbb{P}\bigg( \left| T_{m,n} - \mathbb{E}[T_{m,n} \mid H_1] \right| \geq \mathbb{E}[T_{m,n} \mid H_1] - c_{m,n}(\alpha) \mid H_1\bigg) \notag \\
    &\leq \frac{\operatorname{Var}(T_{m,n} \mid H_1)}{\big( \mathbb{E}[T_{m,n} \mid H_1] - c_{m,n}(\alpha) \big)^2}
\end{align}
   where the last inequality comes from Chebyshev's inequality for large enough $n$ s.t. $\mathbb{E}\big[T_{m,n} | H_1\big] >c_{m,n}(\alpha)$. Then, (\ref{eq:critical_value_bound}), (\ref{eq:asymptotic_mean}), (\ref{eq:variance_bound}) and (\ref{eq:type2_bound}) imply that 
   $$\mathbb{P}\left(T_{m,n} \geq c_{m,n}(\alpha) \mid H_1 \right) \to 1, \quad \text{if}\quad m,n\to \infty \quad \text{and} \quad n/N \to \gamma \in (0,1).$$
\end{enumerate}
\end{proof}

\section{Manifold Learning Algorithms NPE and LPP}\label{AppManLearn}
Suppose we observe $m$ $D$-dimensional Phase I observations $Y =\{Y_{1},\cdots, Y_m\}$. To obtain the embedding function $\hat{f}:\mathcal{M}\to \mathbb{R}^d$ using LPP or NPE, we follow the procedures given below.
\subsection{Locality Preserving Projections}
Locality Preserving Projections \citep{lpp} for dimensionality reduction is described as follows,
\begin{enumerate}
\item (Building the adjacency graph) We simply put an edge in between node $i$ and $j$ if $Y_i$ and $Y_j$ are close enough. There are two ways to define this "closeness".
\begin{itemize}
  \item Nodes $i$ and $j$ are connected if $||Y_i-Y_j||^2< \epsilon$ where the norm is Euclidean norm in $\mathbb{R}^D$.
  \item Nodes $i$ and $j$ are connected if $Y_i$ is among k-nearest neighbors of $Y_j$ or vice versa.
\end{itemize}
\item (Setting the weights) Although nodes $i$ and $j$ are connected, it is also possible to measure that closeness by setting weights to the edges. There are two variations to do this.
\begin{itemize}
  \item if node $i$ and $j$ are connected, assign the weight of the associated edge as
  $$W_{ij}=\exp \big \{-\frac{||Y_i-Y_j||^2}{t_0}\big \}$$
  This is called the heat kernel.
  \item The second way to set $W_{ij}=1$, this is equivalent to having $t_0=\infty$ in heat kernel
\end{itemize}
\item Let $K \in \mathbb{R}^{m\times m}$ be a diagonal matrix s.t. $K_{ii}=\sum_{j=1}^mW_{ij}$ and $L=K-W \in \mathbb{R}^{m \times m}$. Then, solve the generalized eigenvalue problem $YLY^{\top}f=\lambda YKY^{\top}f$. That is, find the eigenvectors and eigenvalues such that
    $$YLY^{\top}f_1=\lambda YDY^{\top}f_1$$ $$...$$
    $$YLY^{\top}f_{d}=\lambda YDY^{\top}f_{d}$$
    Notice that all eigenvalues are non-negative.
    $$0\leq \lambda_1 \leq \lambda_2 \leq ... \leq \lambda_{d}$$
    Find embedding of $Y_t$, which is $y_t\in \mathbb{R}^d$, by using the eigenvectors that are associated with the smallest $d $ eigenvalues, that is,
    $$y_t= UY_t$$
    where $U^{\top}=[f_1,f_2,...,f_d]$
\end{enumerate}

\subsection{Neighborhood Preserving Embedding}

Neighborhood Preserving Embedding \citep{npe} for dimensionality reduction is described as follows,

\begin{enumerate}
\item (Building the adjacency graph) This step is the same as the first step of LPP. For the sake of notation, let $N_i$ denote the set of points that are found to be close enough to $Y_i$.
\item (Assigning the weights) This step utilizes the fact that a manifold locally resembles Euclidean space. We optimally construct each point $Y_i$ from the points in its neighborhood by assigning weights. Then we obtain the weighted graph $W\in \mathbb{R}^{m\times m}$ whose elements are denoted by $W_{ij}$
$$W=\arg \min_W \sum_{i=1}^m ||Y_i-\sum_{i=1}^m W_{ij}Y_j||^2$$ $$\text{s.t.}$$ $$\sum_{j=1}^m W_{ij}=1 \quad \forall i\in\{1,2,...,m\}$$ $$W_{ij}=0 \quad \text{if} \quad Y_j\not\in N_i$$
\item (Spectral Embedding) Solve the following generalized
eigenvector problem, $YMY^{\top}f=\lambda YY^{\top}f$, where $M=(I_m-W)^{\top}(I_m-W)$.
Embed $Y_t$ by using the eigenvectors that are associated with the smallest $d$ eigenvalues, that is,
    $$Y_t= Uy_t$$
    where $U^{\top}=[f_1,f_2,...,f_d]$
\end{enumerate}

\section{Parallel Tennessee Eastman Simulator Description}\label{AppTE}
We use the revised TE simulator of \cite{BATHELT2015309}, which includes feedback control loops and consists of 85 observed variables to monitor and 28 different disturbances which can be introduced at varying amplitudes. This TE simulator is shown in Figure \ref{tep}. 

This revised simulator consist of three different MATLAB Simulink models for two different operating modes of the TE Process, together with their feedback control strategies described in \cite{ricker1996decentralized} and \cite{larsson2001self}. 

Although 85 variables are available, not all are relevant to our framework: 43 cannot be sampled at equidistant time intervals, and 12 manipulated variables are not subject to measurement error. We therefore focus on the remaining 30 variables. This leaves 30 usable variables, which by itself does not constitute a high-dimensional setting. To create a more challenging problem and better evaluate our proposed methods, we replicated 10 independent instances of the TE Simulator to construct a 300-dimensional process shown in Figure \ref{fig:combined-te-simulator}. 
\begin{figure}[ht]
    \centering
    \begin{tikzpicture}[
      >=Latex,
      node distance=4mm,
      sim/.style={draw, rounded corners, minimum width=3.4cm, minimum height=7mm, align=center},
      big/.style={draw, thick, rounded corners, minimum width=5.6cm, minimum height=12mm, align=center},
      labelsmall/.style={font=\footnotesize, align=center},
      labelbig/.style={font=\large, align=center}
    ]
    
    \def\nsims{10}
    \def\xgap{6.5}     
    
    \node[sim] (sim1) {TE Simulator 1};
    \foreach \i in {2,...,\nsims}{
      \node[sim, below=of sim\the\numexpr\i-1\relax] (sim\i) {TE Simulator \i};
    }
    
    \draw[decorate,decoration={brace,mirror, amplitude=4pt}]
      ($(sim1.north west)+(-6pt,6pt)$) -- ($(sim\nsims.south west)+(-6pt,-6pt)$)
      node[midway, xshift=-14pt, rotate=90, labelbig] {10 TE Simulators};
    
    \node[big, right=\xgap of $(sim1)!0.5!(sim\nsims)$] (combo) {Replicated TE Simulator\\ \textbf{300 variables}};
    
    \foreach \i in {1,...,\nsims}{
      \draw[->] (sim\i.east) -- (combo.west);
    }
    
    \draw[->, thick, red]
      ($(sim1.west)+(-1.2,0)$) -- (sim1.west)
      node[pos=0, left=2pt, labelsmall, black, anchor=east] {};
    \node[labelsmall, red, anchor=south west] at ($(sim1.north west)+(0,4pt)$)
      {};
    
    \end{tikzpicture}
    \caption{Layout of the replicated TE simulator: Ten TE simulators are combined into a single system with 300 variables, with the red arrow indicating the only location where the faults are injected.}
    \label{fig:combined-te-simulator}
\end{figure}
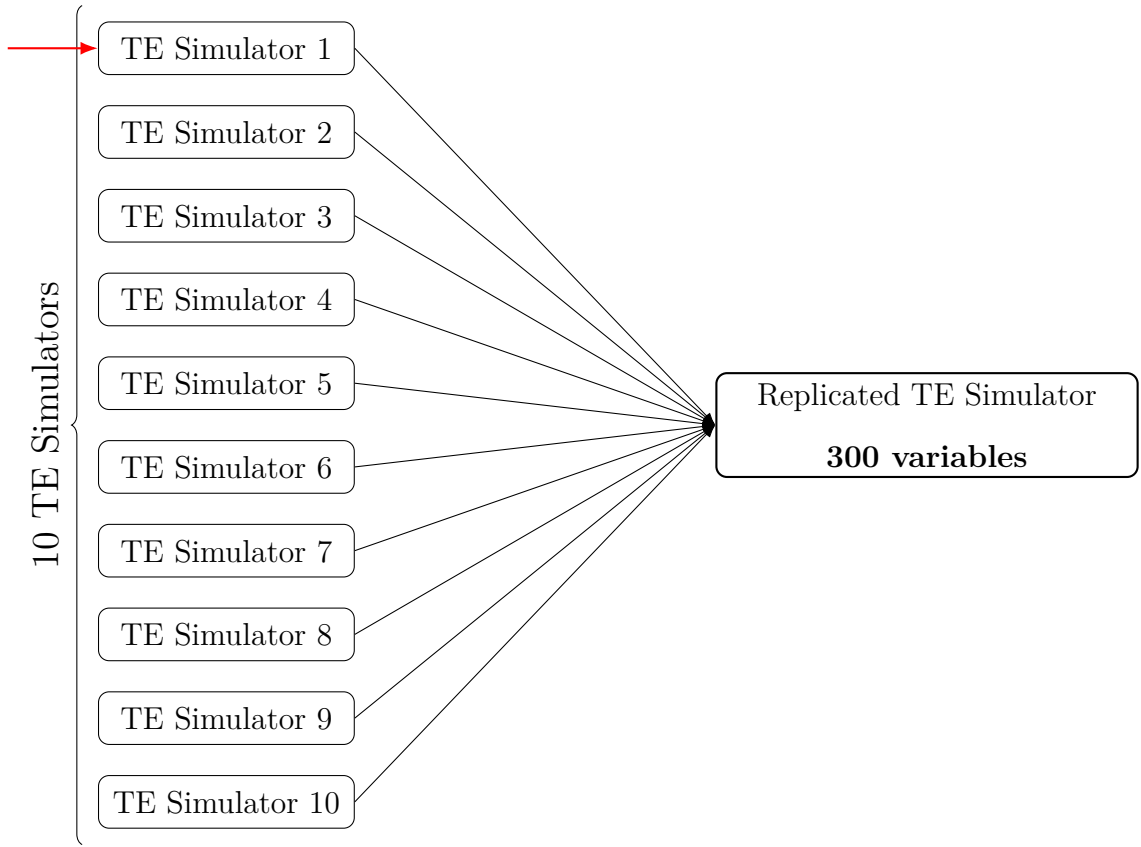

In addition, since the original simulator does not permit fault injection directly from MATLAB scripts, we integrated a Simulink module that allows users to specify the start time, end time, and amplitude of injected faults.

\section{Programs and datasets provided}
\begin{description}
\item[Code:] The code is publicly available at \url{https://github.com/iburaktas/Manifold-SPC}. 
It includes simplified versions of the simulation experiments (Synthetic Process and TE Process) 
and the complete surface anomaly detection experiment presented in the paper. 
In addition, it provides a general pipeline for applying the proposed SPC frameworks to other datasets.

\item[Environment:] The implementation is written in Python, except for the TE Process 
data generation (implemented in MATLAB). A \texttt{requirements.txt} file is provided in the repository 
to install the necessary dependencies.

\item[Models:] The following models are implemented in the \texttt{src/models} folder:
\begin{itemize}
    \item Manifold Fitting Method
    \item LPP (Locality Preserving Projections)
    \item NPP (Neighborhood Preserving Projections)
\end{itemize}

\item[Datasets:] The following datasets and generators are included in the \texttt{src/data} folder:
\begin{itemize}
    \item The Kolektor surface-defect dataset
    \item 100 IC and 60 OC realizations of the TE Process for ARL evaluation with 10 simulations
    \item A data generator for the synthetic process
\end{itemize}

\item[Experiments:] The results of the simplified simulation experiments and the complete 
surface anomaly detection experiment are provided in the \texttt{experiments} folder. Scripts to run these experiments, along with instructions, are available in the \texttt{README.md} file.

\item[Usage on other datasets:] An example of how to use the pipeline with a new dataset 
is described in the \texttt{README.md} file.
\end{description}

\end{document}